\documentclass[11pt,onecolumn]{IEEEtran}
\usepackage{stfloats}
\usepackage{comment}
\usepackage{cite,booktabs}
\usepackage{graphicx}
\usepackage[cmex10]{amsmath}
\interdisplaylinepenalty=2500
\usepackage{algorithmic}
\usepackage{algorithm}
\usepackage{array}
\usepackage{mdwmath}
\usepackage{mdwtab}
\usepackage[caption=false,font=footnotesize]{subfig}
\usepackage{fixltx2e}
\usepackage{url}
\usepackage{xcolor}
\usepackage{amssymb}
\usepackage{mathtools}

\usepackage{amsthm}
\usepackage{bm}

\newtheoremstyle{mystyle}
  {}
  {}
  {\itshape\bfseries}
  {}
  {\bfseries}
  {.}
  { }
  {}

\newtheoremstyle{mystyle2}
  {}
  {}
  {}
  {}
  {\bfseries}
  {.}
  { }
  {}

\theoremstyle{mystyle}

\newtheorem{theorem}{Theorem}
\newtheorem{lemma}[theorem]{Lemma}

\theoremstyle{mystyle2}
\newtheorem{definition}[theorem]{Definition}

\DeclareMathOperator{\Tr}{Tr}
\DeclareMathOperator{\vect}{vec}

\begin{document}
\title{Dictionary and image recovery from incomplete and random measurements}

\author{Mohammad~Aghagolzadeh, 
        Hayder~Radha
\thanks{M. Aghagolzadeh and H. Radha are with the Department
of Electrical and Computer Engineering, Michigan State University, East Lansing,
MI, 48823 USA e-mail: \{aghagol1, radha@msu.edu\}.}}


\maketitle

\begin{abstract}

This paper tackles algorithmic and theoretical aspects of dictionary learning from incomplete and random block-wise image measurements and the performance of the \textit{adaptive dictionary} for sparse image recovery. This problem is related to \textit{blind compressed sensing} in which the sparsifying dictionary or basis is viewed as an unknown variable and subject to estimation during sparse recovery. However, unlike existing guarantees for a successful blind compressed sensing, our results do not rely on additional structural constraints on the learned dictionary or the measured signal. In particular, we rely on the \textit{spatial diversity} of compressive measurements to guarantee that the solution is unique with a high probability. Moreover, our distinguishing goal is to measure and reduce the estimation error with respect to the \textit{ideal dictionary} that is based on the complete image. Using recent results from random matrix theory, we show that applying a slightly modified dictionary learning algorithm over compressive measurements results in accurate estimation of the ideal dictionary for large-scale images. Empirically, we experiment with both space-invariant and space-varying sensing matrices and demonstrate the critical role of spatial diversity in measurements. Simulation results confirm that the presented algorithm outperforms the typical non-adaptive sparse recovery based on offline-learned universal dictionaries.
\end{abstract}

\begin{IEEEkeywords}
Blind compressed sensing, dictionary learning, sensor diversity, adaptive image recovery.
\end{IEEEkeywords}

\IEEEpeerreviewmaketitle

\section{Introduction}
\label{sec:intro}

The theory of Compressed Sensing (CS) establishes that the combinatorial problem of recovering the sparsest vector from a limited number of linear measurements can be solved in a polynomial time given that the measurements satisfy certain isometry conditions \cite{Candes_2008}. CS can be directly applied for recovering signals that are naturally sparse in the standard basis. Meanwhile, CS has been extended to work with many other types of natural signals that can be represented by a sparse vector using a \textit{dictionary} \cite{Candes_2011}. As an alternative to model-based dictionaries such as wavelets \cite{Donoho_1994}, Dictionary Learning (DL) \cite{V1} is a data-driven algorithmic approach to build sparse representations for natural signals. 

Learning dictionaries over large-scale databases of training images is a time and memory intensive process which results in a universal dictionary that works for most types of natural images. Meanwhile, several variations of DL algorithms have been proposed for real-time applications to make the sparse representations more \textit{adaptive}\footnote{We must emphasize the difference between adapting and learning dictionaries although the two terms are sometimes used interchangeably. In this paper, by dictionary learning we refer to its typical usage, i.e. the process of applying the DL algorithm to a large database of training images that produces a universal dictionary. Adapting a dictionary is the process of applying the DL algorithm to only one or a small number of possibly corrupted images to produce a dictionary that specifically performs well for those images.} in applications such as image denoising \cite{Elad_2006}, image inpainting \cite{Mairal_2008} and most recently, compressed sensing \cite{bcs1}. Particularly, the last application has been termed Blind Compressed Sensing (BCS) to differentiate it from the normal CS where the dictionary is assumed to be known and fixed. Clearly, one would expect BCS to improve CS recovery when the optimal sparsity basis is unknown, which is the case in most real-world applications. Unfortunately, the existing work on BCS for imaging is lacking in two directions: ($a$) empirical evaluations and ($b$) mathematical justification for the general case. These issues are discussed further below.  

\begin{itemize}

\item {\bf Empirical evaluations:} In existing BCS works, such as \cite{bcs2}, empirical evaluations on images are mainly limited to the image inpainting problem which can be viewed as a CS problem where the compressive measurements are in the standard basis. In \cite{bcs3}, the generic CS problem is only tested on artificially generated sparse vectors. Tested images and different running scenarios for the algorithms in \cite{bcs2, bcs3, GlobalSIP13, CDL_2012} are rather limited and arguably not adequate in indicating the strengths and weaknesses of BCS in real-world imaging applications. Finally and most importantly, existing studies fail to compare the adaptive BCS recovery with the non-adaptive CS recovery based on universally learned dictionaries. 

\item {\bf Mathematical justification:} The original BCS effort \cite{bcs1} identifies the general unconstrained BCS problem as ill-posed. Subsequently, various structural constraints were proposed for the learned dictionary and were shown to ensure uniqueness at the cost of decreased flexibility. In a following effort \cite{bcs2}, a different strategy was used to ensure the uniqueness without enforcing structural constraints on the dictionary. However, the uniqueness was only justified for the class of sparse signals that admit the block-sparse model by exploiting recent results from the area of low-rank matrix completion \cite{Recht_2009}. Finally, \cite{bcs3} and \cite{GlobalSIP13} take empirical approaches toward unconstrained DL based on compressive measurements but do not provide any justification for the uniqueness, convergence or the accuracy of the proposed DL algorithm.

\end{itemize}

The present work is different from existing efforts in BCS, both in terms of goals and methodology. In addition to the goal of having an objective function with a unique optimum, we would like the learned dictionary to be as close as possible to the ideal dictionary that is based on running the DL algorithm over the complete image. In other words, our goals include both convergence to a unique solution and high accuracy of the solution. Since no prior information is available about the structure of the ideal dictionary or the underlying signal, our method does not impose extra structural constraints on the learned dictionary\footnote{The constraint of having bounded column norm or Frobenius-norm of the dictionary, which is used in virtually every dictionary learning algorithm, does not constrain the dictionary structure other than bounding it or its columns inside the unit sphere. Some examples of structural constraints used in \cite{bcs1} and \cite{CDL_2012} respectively are block-diagonal dictionaries and sparse dictionaries.} or the sparse coefficients. 

Similar to most efforts in the area of compressive imaging, including the BCS framework, we employ a block compressed sensing or \textit{block-CS} scheme for measurement and recovery of images \cite{block-CS1, block-CS2}. Unlike dense-CS, where the image is recovered as a single vector using a dense sampling matrix, block-CS attempts to break down the high dimensional dense-CS problem into many small-sized CS problems for each non-overlapping block of the image. Some advantages of block-CS are: ($a$) block-CS results in a block-diagonal sampling matrix which significantly reduces the amount of required memory for storing large-scale sampling matrices, ($b$) decoding in extremely high dimensions\footnote{A typical consumer image has an order of $10^6$ pixels which would make it impractical to be recovered as a single vector using existing sparse recovery methods with cubic or quadratic time complexities.} is computationally challenging in dense-CS and ($c$) sparse modeling or learning sparsifying dictionaries for high-dimensional global image characteristics is challenging and not well studied. Specifically, we study a block-CS scheme where each block is sampled using a distinct sampling matrix and show that it is superior to using a fixed block sampling matrix for BCS. One of our goals in this paper is to outperform non-adaptive sparse image recovery using universal dictionaries based on well-known DL algorithms such as online-DL \cite{Mairal_2010} and K-SVD \cite{Elad_2006} while overcoming challenges such as overfitting. Rather than focusing on new DL algorithms for BCS, we focus on the relationship between the block-CS measurements and the BCS performance in an unconstrained setup.

This paper is organized as follows. In Section \ref{sec:review} we review the dictionary learning problem under the settings of complete and compressive data. Before describing the details of our algorithm in Section \ref{sec:algorithm}, we present our main contributions regarding the uniqueness conditions and the DL accuracy in the presence of partial data in Section \ref{sec:math}. Simulation results are presented in Section \ref{sec:simo}. Finally, we present the conclusion and a discussion of future directions in Section \ref{sec:discuss}.

\subsection{Notation}
\label{subsec:notation}
Throughout the paper, we use the following rules. Upper-case letters are used for matrices and lower-case letters are used for vectors and scalars. $I_n$ denotes the identity matrix of size $n$.  We reserve the following notation: $N$ is the total number of blocks in an image, $n$ is the size of each block (e.g. an $8\times 8$ block has size $64$), $m$ is the number of compressive measurements per block (usually $m\leq n$), $p$ is the number of atoms in a dictionary, $t$ is the iteration count, $D\in\mathbb{R}^{n\times p}$ denotes a dictionary, $x_j\in\mathbb{R}^n$ represents the vectorized image block (column-major) number $j$, $\alpha_j\in\mathbb{R}^p$ is the representation of $x_j$ (i.e. $x_j\approx D\alpha_j$), $\Phi_j\in\mathbb{R}^{m\times n}$ denotes the measurement matrix for block number $j$, $y_j\in\mathbb{R}^m$ denotes the vector of compressive measurements (i.e. $y_j=\Phi_j x_j$). For simplicity, we drop the block index subscripts in $\alpha_j$, $x_j$, $\Phi_j$ and $y_j$ when a single block is under consideration. Similarly, we omit the iteration superscript in $\alpha_j^{(t)}$ and $D^{(t)}$ when a single iteration is under study and it does not create confusion.

The vector $\ell_p$ norm is defined as $\|x\|_p = \left( \sum_i |x_i|^p \right)^{\frac{1}{p}}$. The matrix operators $A\otimes B$ and $A\odot B$ respectively represent the Kronecker and the Hadamard (or element-wise) products. The operator $\vect(A)$ reshapes a matrix $A$ to its column-major vectorized format. The matrix inner product is defined as $\langle A,B \rangle=\Tr(A^T B)$ with $\Tr(A)$ denoting the matrix trace, i.e. the sum of diagonal entries of $A$. The Frobenius-norm of $A$ is defined as $\|A\|_F = \left( \sum_{i j} |A_{i j}|^2 \right)^{\frac{1}{2}}$.

Finally, due to the frequent usage of Lasso regression \cite{Lasso_2004} in this paper, we use the following abstractions:
\begin{eqnarray*}
\mathcal{L}(x,D,\lambda,\alpha) &=& \frac{1}{2}\| x - D\alpha \|_2^2 + \lambda \| \alpha \|_1 \\
\mathcal{L}_{\min}(x,D,\lambda) &=& \min_{\alpha} \frac{1}{2}\| x - D\alpha \|_2^2 + \lambda \| \alpha \|_1 \\ 
\mathcal{L}_{\min}^{\arg}(x,D,\lambda) &=& \arg\min_{\alpha} \frac{1}{2}\| x - D\alpha \|_2^2 + \lambda \| \alpha \|_1
\label{eq:8293827947}
\end{eqnarray*}
In words, $\mathcal{L}_{\min}(x,D,\lambda)$ represents the model misfit and $\alpha^*=\mathcal{L}_{\min}^{\arg}(x,D,\lambda)$ denotes the sparse coefficient vector. Also, note the obvious relationship:
$$\mathcal{L}(x,D,\lambda,\mathcal{L}_{\min}^{\arg}(x,D,\lambda))=\mathcal{L}_{\min}(x,D,\lambda)$$

\section{Problem Statement and Prior Art}
\label{sec:review}

\subsection{An overview of the dictionary learning problem}
\label{subsec:overview}

The Dictionary Learning (DL) problem can be compactly expressed as:
\begin{equation}
D^* = \arg\min_{D} \psi(D)
\tag{P1}
\label{eq:DL}
\end{equation}
where $\psi(D)$ represents the collective model misfit\footnote{There are alternative ways of expressing the DL problem that are related. For example, authors in \cite{schnass_2010}, propose to minimize the sum of $\ell_1$ norms of coefficient vectors for a fixed (zero) representation error for each sample $x_j$.}:
\begin{equation*}
\psi(D) = \sum_{j=1}^{N} \mathcal{L}_{\min}(x_j,D,\lambda)
\end{equation*}

As noted in \cite{V1} for a similar formulation of DL, (\ref{eq:DL}) represents a \textit{bi-level} optimization problem :
\begin{itemize}
\item The inner layer (also known as the lower level) problem consists of solving $N$ Lasso problems to get $\psi(D)$.
\item The outer layer (or upper level) problem consists of finding a $D$ that minimizes $\psi(D)$. 
\end{itemize}

Note that even for large-scale images ($N\gg 1$) the lower level optimization can be handled efficiently by parallel programming because each block is processed independently. However, in a batch-DL algorithm\footnote{Batch processing refers to the processing of all $N$ blocks at once while online processing refers to the one-by-one processing of blocks in a streaming fashion.}, in contrast to the online-DL \cite{Mairal_2010}, the upper level problem is centralized and combines the information collected from all blocks. In this paper, we use the batch-DL approach to stay consistent with the mathematical analysis. However, in Section \ref{sec:algorithm}, an efficient algorithm is described for solving the batch problem. Similar to \cite{Mairal_2010}, the batch algorithm and its analysis can be extended to online-DL for the best efficiency. Hereafter, we omit the prefix `batch-' in batch-DL for simplicity. 

The typical DL strategy that is used in most works \cite{Elad_2006, Mairal_2008, Mairal_2010, Aharon_2008, ILS, RLS, MOD} is to iterate between the inner and outer optimization problems until convergence to a local minimum. Expressed formally, the iterative procedure is:
\begin{equation}
D^{(t+1)} = \arg\min_{D} \psi^{(t)}(D)
\label{eq:DL_iter}
\end{equation}
with
\begin{eqnarray*}
\psi^{(t)}(D) &=& \sum_{j=1}^{N} \mathcal{L}(x_j,D,\lambda,\alpha_j^{(t)}) \\
\alpha_j^{(t)} &=& \mathcal{L}_{\min}^{\arg}(x_j,D^{(t)},\lambda)
\end{eqnarray*}

The algorithm starts from an initial dictionary $D=D^{(0)}$ that can be selected to be, for example, the overcomplete discrete cosine frame \cite{Elad_2006}.

Perhaps surprisingly, the solution of (\ref{eq:DL}) is trivial without the additional constraint of having a bounded dictionary norm. To explain more, one can always reduce the model misfit $\mathcal{L}(x,D,\lambda,\alpha)$ by multiplying $D$ with a scalar $s>1$ and multiplying $\alpha$ with $1/s$, thus reducing the $\ell_1$ norm of $\alpha$ while keeping $x - D\alpha$ fixed, leading to $\|D^*\|_F\rightarrow \infty$ and $\|\alpha\|_2\rightarrow 0$. There are two typical bounding methods to solve this issue that are reviewed in e.g. \cite{Engan_2003, Davies_2008}: $a$) bounding the $\ell_2$ norm of each dictionary column or $b$) bounding $\|D^*\|_F$. In this work, we use the second approach, i.e. the bound $\|D\|_F\leq const.$, since it does not enforce a uniform distribution of column norms which makes the sparse representation more adaptive. As pointed out in \cite{Davies_2008}, using a Frobenius-norm bound results in a weighted sparse representation problem (at the inner level) where some coefficients can have more priority over others in taking non-zero values. Additionally, having bounded column norms is a stronger constraint which makes the analysis more difficult when the dictionary is treated in its vectorized format (this becomes more clear in Section \ref{sec:math}). The typical method for bounding the dictionary is by projecting back the updated dictionary (at the end of each iteration) inside the constraint set. More details are provided in Section \ref{sec:algorithm} where we describe the DL algorithm.

\subsection{Dictionary learning from compressive measurements}
\label{subsec:case_CI}

The problem of CS is to recover a sparse signal $x\in\mathbb{R}^n$, or a signal that can be approximated by a sparse vector, from a set of linear measurements $y=\Phi x\in\mathbb{R}^m$. When $m<n$, the linear system is under-determined and the solution set is infinite. However, it is not difficult to show that for a sufficiently sparse $x$ the solution to $y=\Phi x$ is unique \cite{from-sparse}. Unfortunately, the problem of searching for the sparsest $x$ subject to $y=\Phi x$ is NP-hard and impractical to solve for high-dimensional $x$. Meanwhile, the CS theory indicates that this problem can be solved in a polynomial time, using sparsity promoting solvers such as Lasso \cite{Lasso_2004}, given that $\Phi$ satisfies the Restricted Isometry Property (RIP) \cite{Candes_2008}. 

CS also applies to a dense $x$ when it has a sparse representation of the form $x=D\alpha$ (with a sparse $\alpha$). Measurements can be expressed as $y=P\alpha$, where $P=\Phi D$ is called the projection matrix. It has been shown that most random designs of $\Phi$ would yield RIP with high probabilities \cite{Baraniuk_2008}. The compressive imaging problem can be expressed as:
\begin{equation}
\hat{x}=D.\mathcal{L}_{\min}^{\arg}(y,\Phi D,\lambda)
\label{eq:313654666}
\end{equation}
The well-known \textit{basis pursuit} signal recovery corresponds to the following asymptotic solution \cite{Donoho_2008}:
\begin{equation}
\hat{x}=\lim_{\lambda\rightarrow 0^+} D.\mathcal{L}_{\min}^{\arg}(y,\Phi D,\lambda)
\label{eq:654654645}
\end{equation}

Hereafter, we focus on the block-CS framework where each $x_j$ represents an image block and $y_j=\Phi_j x_j$ represents the vector of compressive measurements for that block. The iterative DL procedure based on block-CS measurements can be written as:
\begin{equation}
D^{(t+1)} = \arg\min_{D} \hat{\psi}^{(t)}(D)
\label{eq:DL_CS}
\end{equation}
with
\begin{eqnarray*}
\hat{\psi}^{(t)}(D) &=& \sum_{j=1}^{N} \mathcal{L}(y_j,\Phi_j D,\lambda,\alpha_j^{(t)}) \\
\alpha_j^{(t)} &=& \mathcal{L}_{\min}^{\arg}(y_j,\Phi_j D^{(t)},\lambda)
\end{eqnarray*}

To distinguish (\ref{eq:DL_CS}) from the normal DL in (\ref{eq:DL_iter}) and other BCS formulations \cite{bcs1,bcs2,bcs3}, we refer to this problem as \textit{Dictionary Learning from linear Measurements} or simply DL-M. 

We could also arrange the block-wise measurements into a single system of linear equations:
\begin{equation}
\left[
\begin{array}{c}
y_1 \\
y_2 \\
\vdots \\
y_N	
\end{array}
\right] = 
\Phi \left(I_N \otimes D \right)
\left[
\begin{array}{c}
\alpha_1 \\
\alpha_2 \\
\vdots \\
\alpha_N	
\end{array}
\right]
\label{eq:24524552}
\end{equation}
where
\begin{equation}
\Phi = \left[
\begin{array}{c c c c}
\Phi_1 	&  				&  				&  				\\
				& \Phi_2 	&  				&  				\\
				& 				& \ddots 	&  				\\	
				& 				&  				& \Phi_N 
\end{array}
\right]
\label{eq:72982793847}
\end{equation}
represents the block-diagonal measurement matrix. Our results can be easily extended to dense-CS, i.e. CS with a dense $\Phi$. Although, the utility of dense-CS would not allow sequential processing of blocks as required by an online-DL framework, a batch-DL framework is compatible with dense-CS.

\section{Mathematical Analysis}
\label{sec:math}

The benefits of using a distinct $\Phi_j$ for each block can be understood intuitively \cite{GlobalSIP13}. However, it is important to study the asymptotic behavior of DL when $N\rightarrow \infty$, as well as the non-asymptotic bounds for a finite $N$. 

In the first part of this section, we prove that the iterative DL-M algorithm returns a unique solution with a probability that approaches one for large $N$. Specifically, we show that the outer problem, known as the `dictionary update' stage, 
$$D^{(t+1)}=\arg\min_D\hat{\psi}^{(t)}(D)$$ 
is unique for fixed $\alpha_j^{(t)}$'s and also every inner problem 
$$\forall j: \alpha_j^{(t)}=\mathcal{L}_{\min}^{\arg}(y_j,\Phi_j D^{(t)},\lambda)$$ 
is unique for a fixed $D^{(t)}$. Therefore, starting from an initial point $D=D^{(0)}$, the sequence of DL-M iterations forms a unique path. 

To specify the accuracy of the DL-M algorithm, we measure the expectation 
\begin{equation}
\mathbb{E}_{\Phi} \left\{ 
\| \arg\min_D\hat{\psi}^{(t)}(D) - \arg\min_D\psi^{(t)}(D) \|_F^2 
\right\}
\label{eq:deviation}
\end{equation}
starting from the same (fixed) $\alpha_j^{(t)}$'s. Meanwhile, the inner problem is precisely a noisy CS problem and its accuracy has been thoroughly studied \cite{Candes_2011}. Specifically, when $m=O(k_j\log p)$ where $k_j$ denotes the sparsity of $\alpha_j^{(t)}$, the inner CS problem for block $j$ can be solved exactly. The presented error analysis for a finite $N$ is limited to a single iteration of dictionary update. Nevertheless, the asymptotic conclusion as we present is that the DL-M and DL solutions converge as $N$ approaches infinity. 

Based on the above remarks, our analysis is focused on a single iteration of DL-M. Therefore, for simplicity, we drop the iteration superscript in the rest of this section unless required. 

First, we write $\hat{\psi}(D)$ in the standard quadratic format: 
\begin{eqnarray*}
\hat{\psi}(D) &=& 
\frac{1}{2} \sum_{j=1}^{N}\| y_j - \Phi_j D\alpha_j \|_2^2 + \lambda \sum_{j=1}^N \|\alpha_j \|_1 \\ &=&
\frac{1}{2}\sum_{j=1}^N y_j^T y_j + \lambda \sum_{j=1}^N \|\alpha_j \|_1 +\\ && 
\frac{1}{2}\sum_{j=1}^N \alpha_j^T D^T\Phi_j^T \Phi_j D \alpha_j - 
\sum_{j=1}^N y_j^T \Phi_j D \alpha_j
\label{eq:835726}
\end{eqnarray*}

We can further write:
\begin{eqnarray*}
\alpha_j^T D^T \Phi_j^T \Phi_j D \alpha_j &=&
\Tr(\alpha_j^T D^T \Phi_j^T \Phi_j D \alpha_j) \\ &=&
\Tr(D^T\Phi_j^T \Phi_j D \alpha_j \alpha_j^T) \\ &=&
\langle D, \Phi_j^T \Phi_j D \alpha_j \alpha_j^T \rangle \\ &=&
\vect (D)^T \vect(\Phi_j^T \Phi_j D \alpha_j \alpha_j^T) \\ &=&
\vect (D)^T (\alpha_j \alpha_j^T \otimes \Phi_j^T \Phi_j) \vect( D )
\label{eq:4545645}
\end{eqnarray*}
and
\begin{eqnarray*}
y_j^T \Phi_j D \alpha_j &=&
\Tr(y_j^T \Phi_j D \alpha_j) \\ &=&
\Tr(\alpha_j y_j^T \Phi_j D) \\ &=&
\langle \Phi_j^T y_j \alpha_j^T, D \rangle \\ &=& 
\vect(\Phi_j^T y_j \alpha_j^T)^T \vect (D)
\label{eq:294758}
\end{eqnarray*}

Letting $\bm{d} = \vect(D)$ and $g(\bm{d}) = \hat{\psi}(D)$, the standard quadratic form of $\hat{\psi}(D)$ can be written as:
\begin{equation}
g(\bm{d}) = \frac{1}{2} \bm{d}^T Q \bm{d} + f^T \bm{d} + c
\label{eq:242342}
\end{equation}
with
\begin{eqnarray*}
Q &=& \sum_{j=1}^N \alpha_j \alpha_j^T \otimes \Phi_j^T \Phi_j \\
f &=& -\sum_{j=1}^N \Phi_j^T y_j \alpha_j^T \\
c &=& \frac{1}{2}\sum_{j=1}^N y_j^T y_j + \lambda \sum_{j=1}^N \|\alpha_j \|_1
\label{eq:2873684276}
\end{eqnarray*}

Next, we shall specify the stochastic construction of block-CS measurements that we term the \textit{BIG measurement} scheme. 

\begin{definition}
{\bf (\textit{BIG measurement})} In a Block-based Independent Gaussian or BIG measurement scheme, each entry $(k,l)$ of each block measurement matrix $\left[ \Phi_j \right]_{k,l}$ is independently drawn from a zero-mean random Gaussian distribution with variance $1/m$.
\end{definition}

The $1/m$ variance guarantees that $\mathbb{E}\{\Phi_j^T\Phi_j\}=I_n$. Note that although our analysis focuses on Gaussian measurements, it is straightforward to extend it to the larger class of sub-Gaussian measurements which includes the Rademacher and the general class of (centered) bounded random variables \cite{Baraniuk_2008}. 

\subsection{Uniqueness}
Before presenting the uniqueness results, we review the matrix extension of the Chernoff inequality \cite{Tropp_2012} that is summarized in the following lemma. 

\begin{lemma}
(Matrix Chernoff, {Theorem 5.1.1 in \normalfont \cite{Tropp_2012}}). 
Consider a finite sequence $\{X_k\}\subset\mathbb{R}^{n\times n}$ of independent, random and positive semidefinite Hermitian matrices that satisfy $\lambda_{\max}(X_k)\leq R$. Define the random matrix $Y=\sum_k X_k$. 
Compute the expectation parameters: 
$\mu_{\max}=\lambda_{\max}(\mathbb{E}Y)$ and 
$\mu_{\min}=\lambda_{\min}(\mathbb{E}Y)$
Then, for $\theta>0$,
\begin{equation}
\mathbb{E} \lambda_{\max}(Y) \leq \frac{e^{\theta}-1}{\theta}\mu_{\max}+\frac{1}{\theta}R\log n
\label{eq:34534534}
\end{equation}
and
\begin{equation}
\mathbb{E} \lambda_{\min}(Y) \geq \frac{1-e^{-\theta}}{\theta}\mu_{\min}-\frac{1}{\theta}R\log n
\label{eq:98927834}
\end{equation}
Furthermore,
\begin{equation}
\mathbb{P}\{\lambda_{\max}(Y)\geq (1+\delta)\mu_{\max}\}\leq n 
\left( \frac{e^{\delta}}{(1+\delta)^{(1+\delta)}} \right)^{\mu_{\max}/R}
\label{eq:9029384}
\end{equation}
for $\delta\geq 0$ and
\begin{equation}
\mathbb{P}\{\lambda_{\min}(Y)\leq (1-\delta)\mu_{\min}\}\leq n 
\left( \frac{e^{-\delta}}{(1-\delta)^{(1-\delta)}} \right)^{\mu_{\min}/R}
\label{eq:234392879}
\end{equation}
\label{lemma:matrix_chernoff}
\end{lemma}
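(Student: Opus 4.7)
The statement is the standard matrix Chernoff inequality, so my plan is to follow the matrix Laplace transform approach that Tropp developed as a refinement of the Ahlswede--Winter bound. The cornerstone is the observation that for any Hermitian random matrix $Y$ and $\theta>0$,
\[
\mathbb{P}\{\lambda_{\max}(Y)\geq t\} \;\leq\; e^{-\theta t}\,\mathbb{E}\,\Tr\exp(\theta Y),
\]
obtained by applying Markov's inequality to $e^{\theta \lambda_{\max}(Y)}\leq \Tr\exp(\theta Y)$. This reduces the tail bound to controlling the matrix moment generating function (MGF) $\mathbb{E}\,\Tr\exp(\theta\sum_k X_k)$. The analogous first step for the expectation bound comes from writing $\lambda_{\max}(Y)\leq \frac{1}{\theta}\log\Tr\exp(\theta Y)$ and then applying Jensen to the concave logarithm.

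The first substantive step will be to control the sum inside the trace exponential. Because the $X_k$'s generally do not commute, we cannot simply write $\exp(\theta\sum_k X_k) = \prod_k \exp(\theta X_k)$. The decisive tool here is Lieb's concavity theorem, which implies the subadditivity inequality
\[
\mathbb{E}\,\Tr\exp\Bigl(\sum_k \theta X_k\Bigr) \;\leq\; \Tr\exp\Bigl(\sum_k \log \mathbb{E}\,e^{\theta X_k}\Bigr),
\]
so that the problem reduces to bounding each individual matrix cumulant generating function $\log\mathbb{E}\,e^{\theta X_k}$.

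The second step uses the positive semidefiniteness and the eigenvalue bound $\lambda_{\max}(X_k)\leq R$. The scalar inequality $e^{\theta x}\leq 1+\frac{e^{\theta R}-1}{R}x$ on $x\in[0,R]$ transfers to the operator inequality $e^{\theta X_k}\preceq I+\frac{e^{\theta R}-1}{R}X_k$, and taking expectations and then the matrix logarithm (operator-monotone, with $\log(I+A)\preceq A$) yields
\[
\log\mathbb{E}\,e^{\theta X_k} \;\preceq\; \frac{e^{\theta R}-1}{R}\,\mathbb{E}X_k.
\]
Summing over $k$ and plugging back, using that $A\preceq B$ implies $\Tr\exp(A)\leq \Tr\exp(B)$ and $\Tr\exp(C)\leq n\,\exp(\lambda_{\max}(C))$, I obtain
\[
\mathbb{E}\,\Tr\exp(\theta Y)\;\leq\; n\,\exp\!\Bigl(\tfrac{e^{\theta R}-1}{R}\mu_{\max}\Bigr).
\]
The expectation inequality \eqref{eq:34534534} falls out by dividing by $\theta$ and taking the log, while the tail bound \eqref{eq:9029384} follows by choosing $t=(1+\delta)\mu_{\max}$ and optimizing $\theta = \frac{1}{R}\log(1+\delta)$. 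For the minimum-eigenvalue bounds \eqref{eq:98927834} and \eqref{eq:234392879}, I would replay the argument with $-Y$ in place of $Y$ and with $\theta<0$, using the companion scalar inequality valid for negative exponents.

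The main obstacle is the non-commutativity: essentially every classical scalar Chernoff step has to be upgraded to a genuinely matrix-valued statement, and the single place where naive tensorization fails is the splitting of $\exp(\theta\sum_k X_k)$. Getting past this without losing dimension-independent constants is exactly what Lieb's concavity theorem buys, and it is the one ingredient I would invoke as a black box rather than reprove from scratch. Everything else — the operator monotone/concave manipulations, the scalar tangent-line bound on $[0,R]$, and the optimization over $\theta$ — is routine once that key inequality is in hand.
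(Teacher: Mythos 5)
The paper does not prove this lemma at all: it is imported verbatim by citation as Theorem 5.1.1 of Tropp's monograph, so there is no in-paper argument to compare against. Your sketch correctly reproduces the standard proof from that source — the matrix Laplace transform bound $\mathbb{P}\{\lambda_{\max}(Y)\geq t\}\leq e^{-\theta t}\,\mathbb{E}\Tr\exp(\theta Y)$, Lieb's concavity theorem to obtain subadditivity of the matrix cumulant generating function, the chord inequality $e^{\theta x}\leq 1+\tfrac{e^{\theta R}-1}{R}x$ on $[0,R]$ transferred to the operator order, and the optimization $\theta=\tfrac{1}{R}\log(1+\delta)$ (with the reparametrization $\theta\mapsto\theta/R$ recovering the stated expectation bounds, and the $\theta<0$ variant handling $\lambda_{\min}$). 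This is exactly the intended justification; the only cosmetic slip is calling the secant-line bound a ``tangent-line bound'' in your closing paragraph.
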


This lemma will be used to show that, with a high probability, the Hessian matrix $Q$ of $g(\bm{d})$, which is a sum of random independent matrices, is full rank and invertible. 

In the following theorem, let $\mu_0$ denote the lower bound of the smallest eigenvalue of the covariance matrix $\mathbb{E}\{\alpha_j\alpha_j^T\}$. Note that the covariance matrix must be full rank, or equivalently $\mu_0>0$, otherwise even the original dictionary learning problem (based on the complete data) would not result in a unique solution. On top of that, the magnitude of $\mu_0$ has a direct impact on the condition number of the Hessian matrix and the numerical stability of DL-M, as well as DL. 

\begin{theorem}
\label{theorem:uniqueness} In a BIG measurement scheme with $N\gg\frac{n\log n}{\mu_0}$, $g(\bm{d})$ has a unique minimum with a high probability.
\end{theorem}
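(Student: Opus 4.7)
The plan is to show that $g(\bm d)$, being a strictly convex quadratic in $\bm d$, has a unique minimizer precisely when its Hessian $Q$ is positive definite, so the task reduces to proving $\lambda_{\min}(Q) > 0$ with high probability. I would decompose $Q = \sum_{j=1}^N X_j$ with $X_j := \alpha_j\alpha_j^T \otimes \Phi_j^T\Phi_j$, and observe two structural facts that put us directly in the setting of Lemma~\ref{lemma:matrix_chernoff}: each $X_j$ is PSD as the Kronecker product of two PSD matrices, and the $X_j$'s are mutually independent conditional on the fixed $\alpha_j^{(t)}$'s, because the BIG scheme draws the $\Phi_j$'s independently.

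Next I would compute the expectation parameter $\mu_{\min}$. Using $\mathbb{E}\{\Phi_j^T\Phi_j\} = I_n$ and the factorization of Kronecker eigenvalues, one gets $\mathbb{E} Q = (\sum_j \alpha_j\alpha_j^T) \otimes I_n$, so $\mu_{\min} = \lambda_{\min}(\sum_j \alpha_j\alpha_j^T)$. The definition of $\mu_0$ as the smallest eigenvalue of $\mathbb{E}\{\alpha_j\alpha_j^T\}$, together with concentration of the empirical covariance, gives $\mu_{\min} \geq c N\mu_0$ for some constant $c<1$ once $N$ is large, with negligible failure probability.

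For the per-summand bound $R$ required by Lemma~\ref{lemma:matrix_chernoff}, I would write $\lambda_{\max}(X_j) = \|\alpha_j\|_2^2\,\lambda_{\max}(\Phi_j^T\Phi_j)$; assuming bounded $\|\alpha_j\|_2$ (standard for sparse codes of bounded-norm blocks), classical tail bounds for Gaussian matrices (e.g.\ Davidson--Szarek) yield $\lambda_{\max}(\Phi_j^T\Phi_j) \leq C\,n/m$ uniformly in $j$ after a union bound over the $N$ blocks, so $R = O(n/m)$. Plugging $\mu_{\min} \geq cN\mu_0$ and this $R$ into the lower-tail bound (\ref{eq:234392879}) with $\delta = 1/2$ gives
\[
\mathbb{P}\bigl\{\lambda_{\min}(Q) \leq \tfrac{1}{2}\mu_{\min}\bigr\} \;\leq\; np \cdot \left(\frac{e^{-1/2}}{(1/2)^{1/2}}\right)^{\mu_{\min}/R},
\]
where the base of the exponent is strictly less than one. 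The right side tends to zero as soon as $\mu_{\min}/R \gg \log(np)$, which, in light of $R = O(n/m)$, is precisely the announced regime $N \gg n\log n / \mu_0$ (absorbing $\log p$ into $\log n$ under the standard assumption $p = \mathrm{poly}(n)$). Intersecting the three high-probability events (concentration of $\sum_j\alpha_j\alpha_j^T$, uniform boundedness of $\lambda_{\max}(\Phi_j^T\Phi_j)$, and matrix Chernoff) completes the argument.

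The main technical friction is the deterministic per-summand bound $R$: Lemma~\ref{lemma:matrix_chernoff} is stated under a hard bound $\lambda_{\max}(X_k) \leq R$, while $\lambda_{\max}(\Phi_j^T\Phi_j)$ is genuinely unbounded as a Gaussian quadratic. The cleanest remedy is to condition on the event that every $\Phi_j$ has maximum singular value below a threshold of order $\sqrt{n/m}$, apply matrix Chernoff on this good event, and union-bound with its complement; the accounting has to be done carefully so that the resulting logarithmic factor stays at $\log n$ and does not swell into something that would tighten the required scaling on $N$.
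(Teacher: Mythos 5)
Your proposal follows essentially the same route as the paper's proof: reduce uniqueness of the minimizer to positive definiteness of the Hessian $Q=\sum_j \alpha_j\alpha_j^T\otimes\Phi_j^T\Phi_j$, use $\mathbb{E}\{\Phi_j^T\Phi_j\}=I_n$ to get $\mu_{\min}=\lambda_{\min}(\sum_j\alpha_j\alpha_j^T)\approx N\mu_0$, bound the per-summand eigenvalue $R$ through the Kronecker factorization, and invoke the lower-tail matrix Chernoff bound (\ref{eq:234392879}). The only substantive divergence is your bound $R=O(n/m)$ via the operator norm of $\Phi_j$ (with the hard-truncation caveat you rightly flag), whereas the paper applies the fixed-vector concentration inequality of \cite{Baraniuk_2008} to assert $\lambda_{\max}(\Phi_j^T\Phi_j)\leq 1+\delta$ and takes $R\approx n$ from $\|\alpha_j\|_2^2=O(n)$; your handling of that step is, if anything, the more careful of the two.
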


\begin{proof}
Taking the derivative of $g(\bm{d})$ with respect to $\bm{d}$ and letting it equal to zero results in the linear equation $Q\bm{d}=f$. Thus, to prove that the solution is unique, we must show that the Hessian matrix is invertible (with a high probability) for large $N$. Equivalently, we must show that the probability $\mathbb{P}\{\lambda_{\min}(Q)>0\}$ is close to one when $N\gg \frac{n\log n}{\mu_0}$. Since $Q$ is a sum of independent matrices, we may use the matrix Chernoff inequality from Lemma \ref{lemma:matrix_chernoff}. Hence, we must compute the following quantities:
$$R = \sup\lambda_{\max}(\alpha_j \alpha_j^T \otimes \Phi_j^T \Phi_j)$$ and 
\begin{eqnarray*}
\mu_{\min} &=& \lambda_{\min}(\mathbb{E}\sum_j \alpha_j \alpha_j^T \otimes \Phi_j^T \Phi_j) \\ &=& 
\lambda_{\min}(\sum_j \alpha_j \alpha_j^T \otimes I_n)
\end{eqnarray*}

Using properties of the Kronecker product,
\begin{eqnarray*}
\lambda_{\max}(\alpha_j \alpha_j^T \otimes \Phi_j^T \Phi_j) &=&
\lambda_{\max}(\alpha_j \alpha_j^T)
\lambda_{\max}(\Phi_j^T \Phi_j) \\ &\leq&
\|\alpha_j\|_2^2 (1+\delta)
\end{eqnarray*}
where the inequality holds with probability 
$1-2e^{-m(\delta^2/4-\delta^3/6)}$ 
for a random Gaussian measurement matrix $\Phi_j$ \cite{Baraniuk_2008}. Suppose the energy of every $\alpha_j$ is bounded by some constant $v=O(n)$ given that $x_j$ has bounded energy. Therefore, with probability $1-2e^{-m(\delta^2/4-\delta^3/6)}$, $R\leq(1+\delta)v$. Roughly speaking, assuming that pixel intensities are in the range $[0,1]$, $R\approx n$. 

Again, using properties of the Kronecker product,
\begin{eqnarray*}
\mu_{\min} &=&
\lambda_{\min}(\sum_{j=1}^N \alpha_j \alpha_j^T) \\ &\approx&
N\lambda_{\min}(\mathbb{E}\{\alpha_j \alpha_j^T\}) \geq
N\mu_0
\end{eqnarray*}

Based on Lemma \ref{lemma:matrix_chernoff}, specifically using (\ref{eq:234392879}) with $\delta=1$,

\begin{equation}
\mathbb{P}\{\lambda_{\min}(Q)\leq 0\}\leq n e^{-\mu_{\min}/R}
\leq n e^{-N\mu_0/R}
\label{eq:28378423748}
\end{equation}

Requiring $n e^{-N\mu_0/R} \ll 1$, and that $R\approx n$, is equivalent to $N\gg \frac{n\log n}{\mu_0}$.
\end{proof}

We have established that the upper level problem results in a unique solution with a high probability\footnote{Clearly, projecting the resultant dictionary onto the space of matrices with a constant Frobenius-norm would preserve the uniqueness since there is only a single point on the sphere of constant-norm matrices that is closest to the current dictionary.}. To complete this subsection, we use the following result from \cite{lasso-unique} that implies the lower level problem $\mathcal{L}_{\min}^{\arg}(y_j,\Phi_j D,\lambda)$ is unique. 

\begin{lemma}
{\normalfont \cite{lasso-unique}} If entries of $D$ are drawn from a continuous probability distribution on $\mathbb{R}^{n\times p}$, then for any $x_j$ and $\lambda >0$ the lasso solution $\mathcal{L}_{\min}^{\arg}(x_j,D,\lambda)$ is unique with probability one.
\label{lemma:lasso}
\end{lemma}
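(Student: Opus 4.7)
The plan is to follow the standard route to Lasso uniqueness via the KKT conditions, combined with a measure-theoretic argument that continuous distributions place columns of $D$ in ``general position'' almost surely. Throughout, fix $x = x_j$ and write the objective as $F(\alpha) = \tfrac{1}{2}\|x - D\alpha\|_2^2 + \lambda\|\alpha\|_1$.

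First, I would establish that the \emph{fitted value} $D\alpha^\star$ is unique for every minimizer $\alpha^\star$. This is a standard convexity observation: if $\alpha^{(1)}$ and $\alpha^{(2)}$ are both minimizers with $D\alpha^{(1)} \neq D\alpha^{(2)}$, then strict convexity of $\tfrac{1}{2}\|x - D\alpha\|_2^2$ along the line segment in the fit space, together with convexity of the $\ell_1$ term, contradicts optimality of the midpoint. Hence $D\alpha^\star$ and consequently the residual $r = x - D\alpha^\star$ are uniquely determined by $(D,x,\lambda)$. Writing the KKT stationarity conditions, any minimizer satisfies $D^{T} r = \lambda s$ with $s_i = \operatorname{sign}(\alpha^\star_i)$ on $\operatorname{supp}(\alpha^\star)$ and $|s_i|\leq 1$ elsewhere. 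Define the equicorrelation set $E = \{i : |D_i^{T} r| = \lambda\}$ and the common sign vector $s_E$; since $r$ and $s$ are unique, both $E$ and $s_E$ are determined by $(D,x,\lambda)$, and any minimizer must be supported inside $E$ with signs agreeing with $s_E$ on its support.

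Next, I would introduce the notion of general position: the columns of $D$ are in general position if for every $k < \min(n,p)$, no $k+1$ signed columns $\epsilon_{i_0} D_{i_0},\ldots,\epsilon_{i_k} D_{i_k}$ (with $\epsilon_\ell \in \{\pm 1\}$) lie on a common affine subspace of dimension $k-1$. I would then argue that general position forces the submatrix $D_E$ to have linearly independent columns: otherwise there is a signed null-space combination of columns of $D_E$ that, when written out, expresses one signed column as an affine combination of $|E|-1$ others, producing a forbidden affine dependency. Given linear independence of $D_E$, the KKT system $D_E \alpha^\star_E = D\alpha^\star$ together with $\alpha^\star_{E^c} = 0$ pins down $\alpha^\star$ uniquely, giving uniqueness of the Lasso solution on the event that $D$ is in general position.

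The remaining ingredient, and the only one really needing the continuous-distribution hypothesis, is to show that $D$ is in general position almost surely. For any fixed index set $\{i_0,\ldots,i_k\}$ and sign pattern $(\epsilon_0,\ldots,\epsilon_k)$, the ``bad'' event that $\epsilon_0 D_{i_0},\ldots,\epsilon_k D_{i_k}$ are affinely dependent on a subspace of dimension $<k$ is a proper algebraic condition on the entries, expressible as the vanishing of certain $(k+1)\times(k+1)$ determinants. A continuous distribution on $\mathbb{R}^{n\times p}$ assigns zero mass to any proper algebraic variety (this is the step where absolute continuity with respect to Lebesgue measure on the relevant coordinates is used, via Fubini and the fact that a nonzero polynomial vanishes on a set of Lebesgue measure zero). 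Taking a union bound over the finitely many choices of indices and signs shows the non-general-position event has probability zero, completing the proof. The main obstacle, if any, is being precise at the algebraic step: verifying that the affine-dependence condition is really the vanishing of a nonzero polynomial in the entries of $D$ (so that continuity alone kills it) rather than an identity; this is the technical heart of the argument and is handled in the cited reference.
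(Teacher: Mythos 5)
The paper does not prove this lemma itself; it imports it verbatim from the cited reference (Tibshirani, \emph{The lasso problem and uniqueness}, 2013), and your argument is a faithful reconstruction of exactly that proof: uniqueness of the fit by strict convexity, the KKT/equicorrelation-set reduction, general position implying linear independence of $D_E$, and the Lebesgue-measure-zero argument (with the union bound over finitely many index sets and sign patterns) showing general position holds almost surely. Your proposal is correct and takes the same route as the source the paper relies on, including the correct caveat that ``continuous distribution'' must be read as absolutely continuous with respect to Lebesgue measure for the algebraic-variety step to go through.
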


Since each $\Phi_j$ is drawn a continuous probability distribution in the BIG measurement scheme, $\Phi_j D$ is also distributed continuously in the space $\mathbb{R}^{m\times p}$ and this establishes that $\mathcal{L}_{\min}^{\arg}(y_j,\Phi_j D,\lambda)$ is unique. 

\subsection{Accuracy}
In this subsection, we shall compute stochastic upper bounds for the $\ell_2$ distance between the DL-M solution and the DL solution for a single iteration of dictionary update and for fixed $\alpha_j$'s. The extension of these results to multiple iterations is left as a future work. As before, in the following results, we omit the iteration superscript for simplicity. 

Let us define the corresponding standard quadratic form $\bar{g}(\bm{d})=\psi(D)$ for the upper-level DL problem:
\begin{equation}
\bar{g}(\bm{d}) = \frac{1}{2} \bm{d}^T \bar{Q} \bm{d} + \bar{f}^T \bm{d} + \bar{c}
\label{eq:3562356}
\end{equation}
where
\begin{eqnarray*}
\bar{Q} &=& \sum_{j=1}^N \alpha_j \alpha_j^T \otimes I_n \\
\bar{f} &=& -\sum_{j=1}^N x_j \alpha_j^T \\
\bar{c} &=& \frac{1}{2}\sum_{j=1}^N x_j^T x_j + \lambda \sum_{j=1}^N \|\alpha_j \|_1
\label{eq:45356346}
\end{eqnarray*}

For BIG measurements, $\mathbb{E}\{\Phi_j^T\Phi_j\}=I_n$. Therefore, it is easy to verify that $\bar{Q}=\mathbb{E}\{Q\}$, $\bar{f}=\mathbb{E}\{f\}$ and $\bar{c}=\mathbb{E}\{c\}$ where it is assumed that the data and coefficients are fixed. This leads to the following lemma that points out the unbiasedness of the compressive objective function.

\begin{lemma}
\label{lemma:unbiased_objective}
$g(\bm{d})$ is an unbiased estimator of $\bar{g}(\bm{d})$ for BIG measurements:
$$\mathbb{E}\{g(\bm{d})\}=\bar{g}(\bm{d})$$
\end{lemma}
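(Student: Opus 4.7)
The plan is to use linearity of expectation on the three pieces of $g(\bm{d})$ separately, invoking the defining property $\mathbb{E}\{\Phi_j^T\Phi_j\}=I_n$ of the BIG scheme in each case. Because $\bm{d}$, the $\alpha_j$'s, and the $x_j$'s are treated as fixed deterministic quantities in this single-iteration setup, the only randomness in $Q$, $f$, and $c$ enters through the measurement matrices $\Phi_j$ (and hence through $y_j=\Phi_j x_j$), so the expectations can be pushed past the sums and past all deterministic factors.

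First, I would handle the Hessian. Since Kronecker product is bilinear and the $\alpha_j\alpha_j^T$ factors are deterministic,
\[
\mathbb{E}\{Q\} \;=\; \sum_{j=1}^N \alpha_j\alpha_j^T \otimes \mathbb{E}\{\Phi_j^T\Phi_j\} \;=\; \sum_{j=1}^N \alpha_j\alpha_j^T \otimes I_n \;=\; \bar{Q}.
\]
Next, substituting $y_j=\Phi_j x_j$ inside $f$ gives $\Phi_j^T y_j \alpha_j^T = \Phi_j^T\Phi_j x_j \alpha_j^T$, so
\[
\mathbb{E}\{f\} \;=\; -\sum_{j=1}^N \mathbb{E}\{\Phi_j^T\Phi_j\}\, x_j \alpha_j^T \;=\; -\sum_{j=1}^N x_j \alpha_j^T \;=\; \bar{f}
\]
(understanding $f$ and $\bar f$ in their vectorized forms). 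Finally, for the constant term, $y_j^Ty_j = x_j^T\Phi_j^T\Phi_j x_j$ is a scalar quadratic form, so $\mathbb{E}\{y_j^T y_j\} = x_j^T\mathbb{E}\{\Phi_j^T\Phi_j\}x_j = x_j^T x_j$, and the $\lambda\|\alpha_j\|_1$ terms are already deterministic, giving $\mathbb{E}\{c\}=\bar{c}$.

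Combining these three identities by linearity,
\[
\mathbb{E}\{g(\bm{d})\} \;=\; \tfrac{1}{2}\bm{d}^T\mathbb{E}\{Q\}\bm{d} + \mathbb{E}\{f\}^T\bm{d} + \mathbb{E}\{c\} \;=\; \tfrac{1}{2}\bm{d}^T\bar{Q}\bm{d} + \bar{f}^T\bm{d} + \bar{c} \;=\; \bar{g}(\bm{d}),
\]
which is the claim. There is no real obstacle here; the lemma is essentially a bookkeeping consequence of $\mathbb{E}\{\Phi_j^T\Phi_j\}=I_n$, which the paper already verified as a design choice of the BIG variance $1/m$. The only thing to be careful about is making explicit that $\alpha_j$ and $x_j$ are held fixed in this single-iteration analysis (as stated at the start of the accuracy subsection), so that all expectations are taken solely over the $\Phi_j$'s; this is what lets $\mathbb{E}\{\Phi_j^T\Phi_j\}$ commute past $x_j\alpha_j^T$ and the quadratic form in $c$.
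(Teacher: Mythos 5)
Your proposal is correct and follows essentially the same route as the paper: the paper states $\bar{Q}=\mathbb{E}\{Q\}$, $\bar{f}=\mathbb{E}\{f\}$, $\bar{c}=\mathbb{E}\{c\}$ as immediate consequences of $\mathbb{E}\{\Phi_j^T\Phi_j\}=I_n$ and then concludes by linearity, exactly as you do. You merely spell out the three verifications (including the substitution $y_j=\Phi_j x_j$) that the paper leaves as ``easy to verify,'' so there is nothing to correct.
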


\begin{proof}
\begin{eqnarray*}
\mathbb{E}\{g(\bm{d})\} &=& 
\mathbb{E}\{ \frac{1}{2} \bm{d}^T Q \bm{d} + f^T \bm{d} + c \} \\ &=&
\frac{1}{2} \bm{d}^T \mathbb{E}\{ Q\} \bm{d} + \mathbb{E}\{f\} ^T \bm{d} + \mathbb{E}\{c\} \\ &=&
\frac{1}{2} \bm{d}^T \bar{Q} \bm{d} + \bar{f} ^T \bm{d} + \bar{c} \\ &=&
\bar{g}(\bm{d})
\end{eqnarray*}
\end{proof}

The following crucial lemma implies that the two objective functions $g(\bm{d})$ and $\bar{g}(\bm{d})$ get arbitrarily close as the number of blocks ($N$) approaches infinity.

\begin{lemma}
\label{lemma:block_cs}
{\normalfont {\bf (}\cite{block-cs}, {\bf based on Theorem III.1)}} In a BIG scheme,
\begin{eqnarray*}
\mathbb{P}\left\{
\frac{\left|\sum_{j=1}^N \|\Phi_j(x_j- D\alpha_j)\|_2^2 - 
\sum_{j=1}^N \|x_j-D\alpha_j\|_2^2\right| }
{\sum_{j=1}^N \|x_j-D\alpha_j\|_2^2} > \epsilon \right\} \\
\leq 2 e^{-C\epsilon m^2 \gamma}
\end{eqnarray*}
where $\gamma\in [1,N]$ can be computed as:
$$\gamma = \frac{\sum_{j=1}^N \|x_j-D\alpha_j\|_2^2}{\max_{j} \|x_j-D\alpha_j\|_2^2}$$
\end{lemma}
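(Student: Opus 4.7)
The plan is to express the numerator as a sum of $N$ independent scaled chi-square random variables and then apply a Bernstein-type concentration inequality for sub-exponential sums. Write $r_j := x_j - D\alpha_j$ for the residual on block $j$ and $Z_j := \|\Phi_j r_j\|_2^2$. Because $\Phi_j$ has i.i.d.\ $\mathcal{N}(0, 1/m)$ entries and $r_j$ is fixed (the lemma is meant to be applied at a frozen iterate $D,\{\alpha_j\}$), the vector $\Phi_j r_j$ is distributed as $\mathcal{N}(0,(\|r_j\|_2^2/m)I_m)$, so $Z_j$ is equal in distribution to $(\|r_j\|_2^2/m)\chi^2_m$ with mean $\|r_j\|_2^2$. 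Independence of $\{\Phi_j\}$ in the BIG construction immediately transfers to independence of $\{Z_j\}$.

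Next I would invoke the standard Laurent--Massart tail bound for chi-square variables to conclude that each centered summand $Z_j - \|r_j\|_2^2$ is sub-exponential with variance proxy of order $\|r_j\|_2^4/m$ and Bernstein parameter of order $\|r_j\|_2^2/m$. Independence then allows a direct application of a Bernstein-type inequality for sub-exponential sums to $T := \sum_{j=1}^{N} (Z_j - \|r_j\|_2^2)$, producing the usual two-regime bound
\[
\mathbb{P}\{|T|>t\} \;\le\; 2\exp\!\left(-c\,\min\!\left(\frac{t^2\,m}{\sum_j \|r_j\|_2^4},\ \frac{t\,m}{\max_j \|r_j\|_2^2}\right)\right).
\]
Setting $t = \epsilon S$ with $S := \sum_j \|r_j\|_2^2$, the elementary inequality $\sum_j \|r_j\|_2^4 \le (\max_j \|r_j\|_2^2)\cdot S$ converts the first argument of the $\min$ into $c\,\epsilon^2 m\gamma$ and the second into $c\,\epsilon\, m\gamma$, so after picking the dominant regime in $\epsilon$ one recovers an exponential rate that is linear in $m$ and in the effective block count $\gamma$.

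The main obstacle, as I see it, is not any single inequality but the bookkeeping required to (a) handle the non-identical variances across blocks, which is exactly why $\gamma$ rather than the raw $N$ ends up in the exponent, and (b) interpolate between the Gaussian and exponential regimes of the chi-square tail without losing the joint dependence on $m$, $\epsilon$, and $\gamma$. A secondary point to verify is that treating $r_j$ as deterministic is legitimate here: in the DL-M analysis the lemma is used with $D$ and $\{\alpha_j\}$ fixed by the iteration index, which decouples $r_j$ from the freshly drawn $\Phi_j$ and removes any measurability subtlety. Once these points are controlled, specializing the generic Bernstein output to the normalized deviation $|T|/S > \epsilon$ and matching the result to the form $2e^{-C\epsilon m^2\gamma}$ of reference \cite{block-cs} is essentially algebra and a choice of constants.
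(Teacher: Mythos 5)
The paper itself does not prove this lemma: it is imported verbatim (in ``simplified and customized'' form) from Theorem~III.1 of \cite{block-cs}, so your proposal is effectively a from-scratch derivation of the cited result. The route you take is sound and is the natural one: with $D$ and $\{\alpha_j\}$ frozen, $\Phi_j r_j \sim \mathcal{N}(0,(\|r_j\|_2^2/m)I_m)$, so each $Z_j$ is an independent scaled $\chi^2_m$, and a Bernstein/sub-exponential tail bound for the centered sum, followed by $\sum_j\|r_j\|_2^4 \le (\max_j\|r_j\|_2^2)\sum_j\|r_j\|_2^2$, correctly produces a bound of the form $2\exp\bigl(-c\,\min(\epsilon^2,\epsilon)\,m\gamma\bigr)$ with exactly the $\gamma$ defined in the lemma. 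Your observation that treating $r_j$ as deterministic is legitimate at a fixed iterate is also the right measurability caveat.

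The gap is in the last step, which you dismiss as ``essentially algebra and a choice of constants'': it is not. For $\epsilon\le 1$ your exponent is $c\,\epsilon^2 m\gamma$, whereas the lemma as printed claims $C\,\epsilon\, m^2\gamma$; these differ in the power of both $\epsilon$ and $m$, and no choice of absolute constants converts one into the other (the printed form is strictly stronger for small $\epsilon$ and large $m$). Your derived rate actually matches the form of the original Theorem~III.1 in \cite{block-cs} (quadratic in $\epsilon$, linear in $m$, with the $\ell_1/\ell_2$-type diversity factor that your $\gamma$ lower-bounds), so the discrepancy almost certainly lies in the paper's restatement rather than in your argument --- but a correct proof must either establish the statement as written or flag that it cannot. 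This matters downstream: the paper's remark that ``$\epsilon$ is inversely proportional to $N$'' at fixed confidence relies on the linear-in-$\epsilon$ exponent, whereas your (correct) $\epsilon^2$ rate would give $\epsilon \propto 1/\sqrt{N}$, which still suffices for the asymptotic convergence claim in Theorem~\ref{theorem:accuracy} but changes the quantitative rate.
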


We have simplified and customized Theorem III.1 of \cite{block-cs} for our problem here. Specifically, the bounds in \cite{block-cs} are tighter but more difficult to interpret. The above lemma states that, with a constant (high) probability that depends on $m$, the deviation $\epsilon$ is inversely proportional to $N$ when the signal energy is evenly distributed among blocks. 

Lemma \ref{lemma:block_cs} can be further customized by noticing that 
$$\sum_{j=1}^N \|\Phi_j(x_j- D\alpha_j)\|_2^2 - \sum_{j=1}^N \|x_j-D\alpha_j\|_2^2 = 
2\left[ g(\bm{d})-\bar{g}(\bm{d}) \right]$$ 
and that
$\bar{g}(\bm{d})\geq \frac{1}{2}\sum_{j=1}^N \|x_j-D\alpha_j\|_2^2$, leading to
\begin{equation}
\mathbb{P}\{|g(\bm{d})-\bar{g}(\bm{d})|>\epsilon \bar{g}(\bm{d}) \} \leq 2 e^{-C\epsilon m^2 \gamma}
\label{eq:283748237}
\end{equation}

Hereafter, to simplify the notation, let 
\begin{equation}
\hat{\bm{d}}=\arg\min_{\bm{d}}g(\bm{d})
\label{eq:quadratic_DLM}
\end{equation}
and
\begin{equation}
\bm{d}^*=\arg\min_{\bm{d}}\bar{g}(\bm{d})
\label{eq:quadratic_DL}
\end{equation}.

The following theorem provides upper bounds for the expectation $\mathbb{E}\{\|\hat{\bm{d}}-\bm{d}^*\|_2^2\}$. Suppose that, for a fixed $N$, there exists a positive constant $\mu_1$ such that $\lambda_{\min}(Q)\geq \mu_1$. Clearly, this is a reasonable assumption for large $N$ according to Theorem \ref{theorem:uniqueness}. More specifically, according to Lemma \ref{lemma:matrix_chernoff}, $\mathbb{E}\{\lambda_{\min}(Q)\}$ grows linearly with $N$ (because $\mu_{\min}$ grows linearly with $N$).

\begin{theorem}
\label{theorem:accuracy}
$\hat{\bm{d}}$ and $\bm{d}^*$ converge as $N$ approaches infinity. Specifically,
$$\mathbb{E}\{\|\hat{\bm{d}}-\bm{d}^*\|_2^2\}\leq \frac{2\epsilon}{\mu_1}\bar{g}(\bm{d}^*)$$
\end{theorem}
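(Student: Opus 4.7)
The plan is to exploit the $\mu_1$-strong convexity of the quadratic $g(\bm{d})$ at its minimizer $\hat{\bm{d}}$, and then control the function-value gap $g(\bm{d}^*) - g(\hat{\bm{d}})$ using the concentration bound (\ref{eq:283748237}) derived from Lemma~\ref{lemma:block_cs}. Since $g$ is a quadratic with Hessian $Q \succeq \mu_1 I$ and $\nabla g(\hat{\bm{d}}) = 0$, the exact Taylor expansion around $\hat{\bm{d}}$ gives
$$g(\bm{d}^*) = g(\hat{\bm{d}}) + \tfrac{1}{2}(\bm{d}^* - \hat{\bm{d}})^T Q (\bm{d}^* - \hat{\bm{d}}) \geq g(\hat{\bm{d}}) + \tfrac{\mu_1}{2}\|\hat{\bm{d}} - \bm{d}^*\|_2^2,$$
so that $\|\hat{\bm{d}} - \bm{d}^*\|_2^2 \leq (2/\mu_1)\bigl[g(\bm{d}^*) - g(\hat{\bm{d}})\bigr]$. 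The task then reduces to upper-bounding the right-hand side.

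To do so, I would split the gap into three differences:
$$g(\bm{d}^*) - g(\hat{\bm{d}}) = \underbrace{\bigl[g(\bm{d}^*) - \bar{g}(\bm{d}^*)\bigr]}_{(A)} + \underbrace{\bigl[\bar{g}(\bm{d}^*) - \bar{g}(\hat{\bm{d}})\bigr]}_{(B) \leq 0} + \underbrace{\bigl[\bar{g}(\hat{\bm{d}}) - g(\hat{\bm{d}})\bigr]}_{(C)}.$$
Term (B) is non-positive because $\bm{d}^*$ minimizes $\bar{g}$. Term (A) is handled directly by (\ref{eq:283748237}) at the deterministic point $\bm{d}^*$, yielding $|A| \leq \epsilon \bar{g}(\bm{d}^*)$ with probability $1 - 2e^{-C\epsilon m^2 \gamma}$. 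For term (C), applying the same concentration bound gives $|C| \leq \epsilon \bar{g}(\hat{\bm{d}})$, and a short auxiliary argument (closing up $(1-\epsilon)\bar{g}(\hat{\bm{d}}) \leq (1+\epsilon)\bar{g}(\bm{d}^*)$ via the same three-term decomposition) shows $\bar{g}(\hat{\bm{d}}) = \bar{g}(\bm{d}^*) + O(\epsilon \bar{g}(\bm{d}^*))$. Collecting the pieces yields $g(\bm{d}^*) - g(\hat{\bm{d}}) \leq \epsilon\, \bar{g}(\bm{d}^*)$ (up to absorbable higher-order factors in $\epsilon$), and combining with Step~1 delivers the claimed bound after taking expectation.

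The transition from the high-probability estimate of Lemma~\ref{lemma:block_cs} to an expectation bound is the main obstacle. Two related subtleties must be addressed. First, Lemma~\ref{lemma:block_cs} is a pointwise concentration bound for a fixed dictionary, yet $\hat{\bm{d}}$ depends on the realization of the $\Phi_j$'s, so term (C) requires either a covering argument over the compact feasible set (the Frobenius-norm ball imposed in Section~\ref{subsec:overview}) or an invocation of the Lemma at a random but measurable point with a correspondingly inflated failure probability. Second, even once the pointwise bound is promoted to hold at both $\bm{d}^*$ and $\hat{\bm{d}}$, converting it to an expectation bound requires that the contribution of the bad event to $\mathbb{E}\|\hat{\bm{d}} - \bm{d}^*\|_2^2$ be negligible; this is where the implicit Frobenius-norm bound on $D$ and the bounded-pixel-intensity assumption (used already in the derivation of $R \approx n$ in Theorem~\ref{theorem:uniqueness}) come in, since they deterministically cap $\|\hat{\bm{d}} - \bm{d}^*\|_2$ and make the tail integral over the exponentially small bad event negligible.
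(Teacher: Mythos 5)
Your argument follows essentially the same route as the paper's proof: strong convexity of $g$ (via $Q\succeq\mu_1 I$ and the vanishing gradient at $\hat{\bm{d}}$) reduces the claim to bounding $\mathbb{E}\{g(\bm{d}^*)-g(\hat{\bm{d}})\}$, which is then controlled by applying the concentration inequality (\ref{eq:283748237}) at both minimizers together with the optimality relations $\bar{g}(\bm{d}^*)\leq\bar{g}(\hat{\bm{d}})$ and $g(\hat{\bm{d}})\leq g(\bm{d}^*)$; your three-term decomposition is an algebraic rearrangement of the paper's combination of (\ref{eq:3453453})--(\ref{eq:234525}), differing only in that it routes the upper bound through $\bar{g}(\hat{\bm{d}})$ and thereby picks up a constant factor of roughly $2/(1-\epsilon)$ (not a higher-order term in $\epsilon$), whereas the paper's sharper chaining plus the unbiasedness Lemma \ref{lemma:unbiased_objective} yields exactly $\epsilon\,\bar{g}(\bm{d}^*)$. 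The two subtleties you flag --- that the pointwise concentration bound is invoked at the data-dependent point $\hat{\bm{d}}$, and that promoting a high-probability statement to an expectation bound requires controlling the contribution of the bad event --- are genuine, but the paper's own proof silently passes over both, so on these points your write-up is if anything more careful than the original.
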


\begin{proof}
We start by writing the Taylor expansion of the quadratic function $g(\bm{d})$ at $\bm{d}^*$:
$$g(\bm{d}^*) = g(\hat{\bm{d}}) + 
\left. \frac{\partial g(\bm{d})}{\partial \bm{d}}^T\right\vert_{\bm{d}=\hat{\bm{d}}}(\bm{d}^*-\hat{\bm{d}}) + 
\frac{1}{2}(\bm{d}^*-\hat{\bm{d}})^T Q (\bm{d}^*-\hat{\bm{d}})$$

Since $\hat{\bm{d}}=\arg\min_{\bm{d}}g(\bm{d})$,
$$\left. \frac{\partial g(\bm{d})}{\partial \bm{d}}^T\right\vert_{\bm{d}=\hat{\bm{d}}} = 0$$
and we can write
\begin{eqnarray*}
g(\bm{d}^*) - g(\hat{\bm{d}}) &=&
\frac{1}{2}(\bm{d}^*-\hat{\bm{d}})^T Q (\bm{d}^*-\hat{\bm{d}}) \\ &\geq& 
\frac{\lambda_{\min}(Q)}{2}\|\bm{d}^*-\hat{\bm{d}}\|_2^2 \\ &\geq&
\frac{\mu_1}{2}\|\bm{d}^*-\hat{\bm{d}}\|_2^2
\label{eq:2839837}
\end{eqnarray*}

Taking the expected value of both sides
\begin{equation}
\mathbb{E}\{\|\bm{d}^*-\hat{\bm{d}}\|_2^2\} \leq \frac{2}{\mu_1} \mathbb{E}\{g(\bm{d}^*) - g(\hat{\bm{d}})\}
\label{eq:2398274}
\end{equation}

From Lemma \ref{lemma:block_cs} we know that with a probability of at least $1-2e^{-C\epsilon m^2 \gamma}$ the following inequalities hold
\begin{equation}
(1-\epsilon) \bar{g}(\hat{\bm{d}}) \leq g(\hat{\bm{d}})\leq (1+\epsilon) \bar{g}(\hat{\bm{d}})
\label{eq:3453453}
\end{equation}
\begin{equation}
(1-\epsilon) \bar{g}(\bm{d}^*) \leq g(\bm{d}^*)\leq (1+\epsilon) \bar{g}(\bm{d}^*)
\label{eq:232434234}
\end{equation}

On the other hand, we have the following inequalities at the optimum points of $g(\bm{d})$ and $\bar{g}(\bm{d})$
\begin{equation}
\bar{g}(\bm{d}^*) \leq \bar{g}(\hat{\bm{d}})
\label{eq:54764476}
\end{equation}
and
\begin{equation}
g(\hat{\bm{d}}) \leq g(\bm{d}^*)
\label{eq:234525}
\end{equation}

It is easy to check that, by combining (\ref{eq:3453453}), (\ref{eq:232434234}), (\ref{eq:54764476}) and (\ref{eq:234525}), we can arrive at the following inequality:
$$(1-\epsilon) \bar{g}(\bm{d}^*) \leq g(\hat{\bm{d}})\leq (1+\epsilon) \bar{g}(\bm{d}^*)$$
or equivalently,
\begin{equation}
-\epsilon \bar{g}(\bm{d}^*) \leq \bar{g}(\bm{d}^*) - g(\hat{\bm{d}}) \leq \epsilon \bar{g}(\bm{d}^*)
\label{eq:3456363}
\end{equation}

Taking the expected value, we get
\begin{equation}
-\epsilon \bar{g}(\bm{d}^*) \leq \mathbb{E}\{ \bar{g}(\bm{d}^*) - g(\hat{\bm{d}}) \} \leq \epsilon \bar{g}(\bm{d}^*)
\label{eq:45642664}
\end{equation}

From Lemma \ref{lemma:unbiased_objective} we know that
$$\mathbb{E}\{g(\bm{d}^*)\}=\bar{g}(\bm{d}^*)=\mathbb{E}\{\bar{g}(\bm{d}^*)\}$$

Therefore,
\begin{equation}
0\leq \mathbb{E}\{ g(\bm{d}^*) - g(\hat{\bm{d}}) \} = \mathbb{E}\{ \bar{g}(\bm{d}^*) - g(\hat{\bm{d}}) \}
\label{eq:1231213}
\end{equation}

Use (\ref{eq:45642664}) and (\ref{eq:1231213}) to arrive at
\begin{equation}
\mathbb{E}\{g(\bm{d}^*) - g(\hat{\bm{d}})\} \leq \epsilon \bar{g}(\bm{d}^*)
\label{eq:238243648}
\end{equation}
which, along with (\ref{eq:2398274}), completes the proof.
\end{proof}

Note that $\epsilon$ is inversely proportional to $N$ and $\bar{g}(\bm{d}^*)$ grows linearly with $N$. Furthermore, using (\ref{eq:234392879}) for a constant probability, it can be shown that $\mu_1$ increases linearly with $N$, making the ratio $\frac{2\epsilon}{\mu_1}\bar{g}(\bm{d}^*)$ arbitrarily small for $N\rightarrow \infty$. 

Finally, we show that after projection onto $\|D\|_F=c$, where $c$ is a positive constant, the upper bound of the estimation error would still approach zero for large $N$. By noticing that $\|D\|_F=\|\bm{d}\|_2$, this projection can be written as
\begin{equation*}
\hat{\bm{d}} \leftarrow \frac{c\hat{\bm{d}}}{\|\hat{\bm{d}}\|_2},\;\;
\bm{d}^* \leftarrow \frac{c\bm{d}^*}{\|\bm{d}^*\|_2}
\end{equation*} 

It is easy to show that
\begin{equation*}
\left\| 
\frac{c\hat{\bm{d}}}{\|\hat{\bm{d}}\|_2} - 
\frac{c\bm{d}^*}{\|\bm{d}^*\|_2} \right\|_2 \leq
c \max \left\{ \frac{1}{\|\hat{\bm{d}}\|_2},
\frac{1}{\|\bm{d}^*\|_2} \right\} 
\|\hat{\bm{d}}-\bm{d}^*\|_2
\label{eq:3928734827}
\end{equation*}

Using $\hat{\bm{d}}=Q^{-1}f$ and $\bm{d}^*=\bar{Q}^{-1}\bar{f}$, lower bounds for $\|\hat{\bm{d}}\|_2$ and $\|\bm{d}^*\|_2$ can be computed as
$$\|\hat{\bm{d}}\|_2\geq \frac{\|f\|_2}{\lambda_{\max}(Q)},\;\;
\|\bm{d}^*\|_2\geq \frac{\|\bar{f}\|_2}{\lambda_{\max}(\bar{Q})}$$

Therefore
$$\max \left\{ \frac{1}{\|\hat{\bm{d}}\|_2},
\frac{1}{\|\bm{d}^*\|_2} \right\} \leq 
\max \left\{ \frac{\lambda_{\max}(Q)}{\|f\|_2},
\frac{\lambda_{\max}(\bar{Q})}{\|\bar{f}\|_2} \right\}$$

Using Lemma \ref{lemma:matrix_chernoff} and other well-established concentration inequalities, one can find stochastic upper bounds for quantities above. However, given that $\lambda_{\max}(Q)$, $\lambda_{\max}(\bar{Q})$, $\|\bar{f}\|_2$ and $\|f\|_2$ scale linearly with $N$, we can safely conclude that the estimation error remains bounded by an arbitrarily small number as $N$ approaches infinity. Moreover, intuitively speaking, the $\ell_2$ norm of $\hat{\bm{d}}$ and $\bm{d}^*$ tends to increase before projection (which is the reason for bounding the dictionary in the first place) and the ratios $c/\|\hat{\bm{d}}\|_2$ and $c/\|\bm{d}^*\|_2$ are likely to be smaller than one, resulting in a decrease in the estimation error.

\section{The Main Algorithm}
\label{sec:algorithm}

The employed algorithm for DL-M is based on ({\ref{eq:DL_iter}). However, as explained below, we introduce several modifications to decrease the computational complexity and speed up the convergence. Similar to other DL algorithms, such as \cite{Elad_2006}, the proposed algorithm consists of two stages that are called the sparse coding stage and the dictionary update stage. We describe them individually in the following subsections.
 
\subsection{The sparse coding stage}

As we mentioned in Section \ref{subsec:case_CI}, the basis pursuit (exact) CS recovery is the limit of the Lasso solution $\mathcal{L}_{\min}^{\arg}(y,\Phi D,\lambda)$ as $\lambda$ approaches zero \cite{Donoho_2008}}. However, a truly sparse and exact representation is usually not possible using any dictionary with a finite size. As a result, in sparse recovery of natural images, $\lambda$ is usually selected to be a small number rather than zero even in noiseless scenarios \cite{Mairal_2010, Mairal_2009}. Our algorithm starts from a coarse and overly sparse representation, by selecting the initial $\lambda$ to be large, and gradually reduces $\lambda$ until the desired balance between the total error sum of squares and the sparsity is achieved. The idea behind this modification is that the initial dictionary is suboptimal and not capable of giving an exact sparse representation. However, as the iterations pass, the dictionary becomes closer to the optimal dictionary and $\lambda$ must be decreased to get a sparse representation that closely adheres to the measurements.

Initializing the counter at $t=0$ and starting from an initial dictionary $D=D^{(0)}$, the sparse coding stage consists of performing the following optimization:
\begin{equation}
\forall j: \alpha_j^{(t)}=\arg\min_{\alpha} 
\frac{1}{2}\| y_j - \Phi_j D^{(t)}\alpha \|_2^2 + \lambda^{(t)} \| \alpha \|_1
\label{eq:lower_Phi}
\end{equation}

We deploy an exponential decay for $\lambda^{(t)}$: 
\begin{equation}
\lambda^{(t)} = \max\{ \lambda_0 e^{- \frac{t}{T_*} .\log\left(\frac{\lambda_0}{\lambda_*} \right) }, \lambda_* \}
\label{eq:546564}
\end{equation}

According to (\ref{eq:546564}), $\lambda$ is decreased from $\lambda=\lambda_0$ to $\lambda=\lambda_*$ in $t=T_*$ iterations and stays fixed at $\lambda=\lambda_*$ henceforth. For an exact recovery, $\lambda_*=0$ is seemingly a plausible choice. However, for the reasons mentioned earlier, we set $\lambda_*$ to a very small but non-zero value that is specified in the simulations section.

\subsection{The dictionary update stage}

The quadratic optimization problem of (\ref{eq:quadratic_DLM}) can be computationally inefficient to solve. More specifically, solving (\ref{eq:quadratic_DLM}) in one step requires computing the inverse of $Q$ (if it exists) which has roughly a time complexity of $O(n^3 p^3)$ and is a memory intensive operation. The strategy that we employ in this paper, similar to what was proposed in \cite{V1,Engan_2003,Aharon_2008}, is to perform a gradient descent step: 
\begin{equation}
D^{(t+1)}=D^{(t)}-\mu^{(t)} \nabla_D \hat{\psi}^{(t)}(D)
\label{eq:Gstep}
\end{equation}
where $\nabla_D \psi^{(t)}(D)$ can be computed efficiently:
\begin{equation}
\nabla_D \hat{\psi}^{(t)}(D)= -\sum_{j=1}^N 
\Phi_j^T (y_j - \Phi_j D^{(t)}\alpha_j^{(t)}){\alpha_j^{(t)}}^T
\label{eq:2374827634}
\end{equation}

The step size $\mu^{(t)}>0$ can be iteratively decreased with $t$ \cite{Aharon_2008} or it can be optimized in a \textit{steepest descent} fashion that is described below\footnote{If $Q$ is well-conditioned, a single step of steepest descent can give a close approximation of the the solution of (\ref{eq:quadratic_DLM})}. 

The optimal value of the step size $\mu^{(t)}_*$ can be computed using a simple line search \cite{Deb_1995}. However, for a quadratic objective function, we can derive $\mu^{(t)}_*$ in a closed form as shown below. Let $G^{(t)}=G^{(t)}(D)=\nabla_D \hat{\psi}^{(t)}(D)$. Then,
\begin{eqnarray*}
\mu^{(t)}_* &=& \arg\min_{\mu}
\sum_{j=1}^{N} \|y_j- \Phi_j D^{(t+1)} \alpha_j^{(t)} \|_2^2 \\ &=& \arg\min_{\mu}
\sum_{j=1}^{N} \|y_j- \Phi_j (D^{(t)}+\mu G^{(t)}) \alpha_j^{(t)} \|_2^2 
\label{eq:242563365}
\end{eqnarray*}

Writing the optimality conditions for the objective function above, we can arrive at the following solution for $\mu^{(t)}_*$:

\begin{equation}
\mu^{(t)}_* = 
\frac{\| G^{(t)} \|_F^2}{\sum_{j=1}^N \|\Phi_j G^{(t)}\alpha_j^{(t)} \|_2^2}
\label{eq:29829384729}
\end{equation}

Since the initial dictionary consists of $p$ unit-norm columns, $\|D^{(0)}\|_F=\sqrt{p}$. As discussed in Section \ref{sec:review}, DL results in an unbounded dictionary if no constraint is put on the dictionary norm or the norm of its columns. Here, we employ a Frobenius bound on $D$ because it lets different dictionary columns have distinct norms. Specifically, after each update $D^{(t+1)}=D^{(t)}-\mu^{(t)}_* G^{(t)}$, we ensure the constraint $\|D^{(t+1)}\|_F = \sqrt{p}$. This is done by multiplying $D^{(t+1)}$ with $p^{\frac{1}{2}}\|D^{(t+1)}\|_F^{-1}$. Algorithm \ref{alg1} gives a summary of these steps.

\begin{algorithm}[htb]
\caption{Dictionary learning from compressive samples.}
\begin{algorithmic}
\REQUIRE $D$, $y_j$ and $\Phi_j$ for every $j$, $\lambda_0$, $\lambda_*$, $T_*$, $T_{\max}$
\STATE Initialization $t \leftarrow 1$, $D^{(1)} \leftarrow D$
\WHILE{$t < T_{\max}$}
\STATE \hrulefill $\;$ Sparse Coding \hrulefill
\STATE Compute $\alpha_j^{(t)}$ for every $j$:
$$\alpha_j^{(t)}=\arg\min_{\alpha} \frac{1}{2}\lVert y_j - \Phi_j D^{(t)}\alpha \rVert_2^2 
+ \lambda^{(t)} \lVert \alpha \rVert_1$$ \hspace{\algorithmicindent} with 
$$\lambda^{(t)} = \max\{ \lambda_0 e^{- \frac{t}{T_*} .\log\left(\frac{\lambda_0}{\lambda_*} \right) }, \lambda_* \}$$
\STATE \hrulefill $\;$ Dictionary Update \hrulefill
\STATE Compute $D^{(t+1)}=D^{(t)}+\mu^{(t)}_* G^{(t)}$ using:
$$G^{(t)} = -\sum_{j=1}^N \Phi_j^T (y_j-\Phi_j^T D^{(t)} \alpha_j^{(t)}){\alpha_j^{(t)}}^T$$ 
\hspace{\algorithmicindent} and 
$$\mu^{(t)}_* = \frac{\| G^{(t)} \|_F^2}{\sum_{j=1}^N \|\Phi_j G^{(t)}\alpha_j^{(t)} \|_2^2}$$
\STATE Normalize the dictionary:
$$D^{(t+1)}\leftarrow \frac{p^{\frac{1}{2}} D^{(t+1)}}{\|D^{(t+1)}\|_F}$$
\STATE \hrulefill
\STATE $t \leftarrow t+1$
\ENDWHILE
\RETURN $D^{(T_{\max})}$
\end{algorithmic}
\label{alg1}
\end{algorithm}

We conclude this section by demonstrating the performance of the proposed DL-M algorithm using space-varying block measurements through an empirical test and compare it with the scenario if space-invariant (or fixed) block measurements were employed. 

\subsection{Testing the algorithm: a real-world example}
\label{subsec:two_experiment}

In this subsection, we test the performance of the described DL-M algorithm for block-CS using $a$) space-varying and $b$) fixed sampling matrices\footnote{Fixed sampling matrices have been used in dictionary learning from block compressive measurements, such as in \cite{Wu_2012}}. In this experiment, non-overlapping $8\times 8$ patches from the Barbara's image (shown in Figure \ref{fig:testImages}) are sampled at 50\% sampling ratio, i.e. $n=64$ and $m=32$. Pixel intensities are normalized to be in the range $[0,1]$ and blocks are vectorized and centered. We have used a fixed $\lambda=0.01$ for this experiment. Specifically, we are interested in the real-time reconstruction Peak-Signal-to-Noise-Ratio or PSNR. For the varying case, sampling matrices are generated randomly for each block according to independent and identically distributed Gaussian distribution with variance $1/m$ (equivalent to the BIG measurement scheme described before). Furthermore, we orthogonalize and normalize rows of each generated sampling matrix so that measurements are weighted equally in the learning problem. In the fixed sampling matrix case, a single random matrix is generated and employed for all blocks. The initial dictionary for this experiment is a redundant dictionary of size $p=256$ that has been trained using the method of K-SVD and distributed in \cite{Elad_2006}. This dictionary, which is shown in Figure \ref{fig:TwoDic} (top), serves as the benchmark among redundant dictionaries for natural images. 

Figure \ref{fig:example_plot} shows the real-time PSNR graphs, i.e. PSNR measured after each iteration of the DL-M algorithm, associated with both cases of space-varying and fixed sampling matrices as well as the non-adaptive PSNR. Several crucial observations can be made about Figure \ref{fig:example_plot}. Although a slight improvement is achieved after the first few iterations of the DL-M algorithm based on the fixed sampling matrix, the PSNR is decreased subsequently. The decline in the PSNR performance is expected because the Hessian matrix $Q$ from (\ref{eq:242342}), would be low-rank when $\Phi_1=\Phi_2=\dots=\Phi_N$, making DL-M an ill-posed problem. 

\begin{figure}[htb]
\centering
\includegraphics[trim=0in 0in 0in 0in,
clip=true,width=1\linewidth]{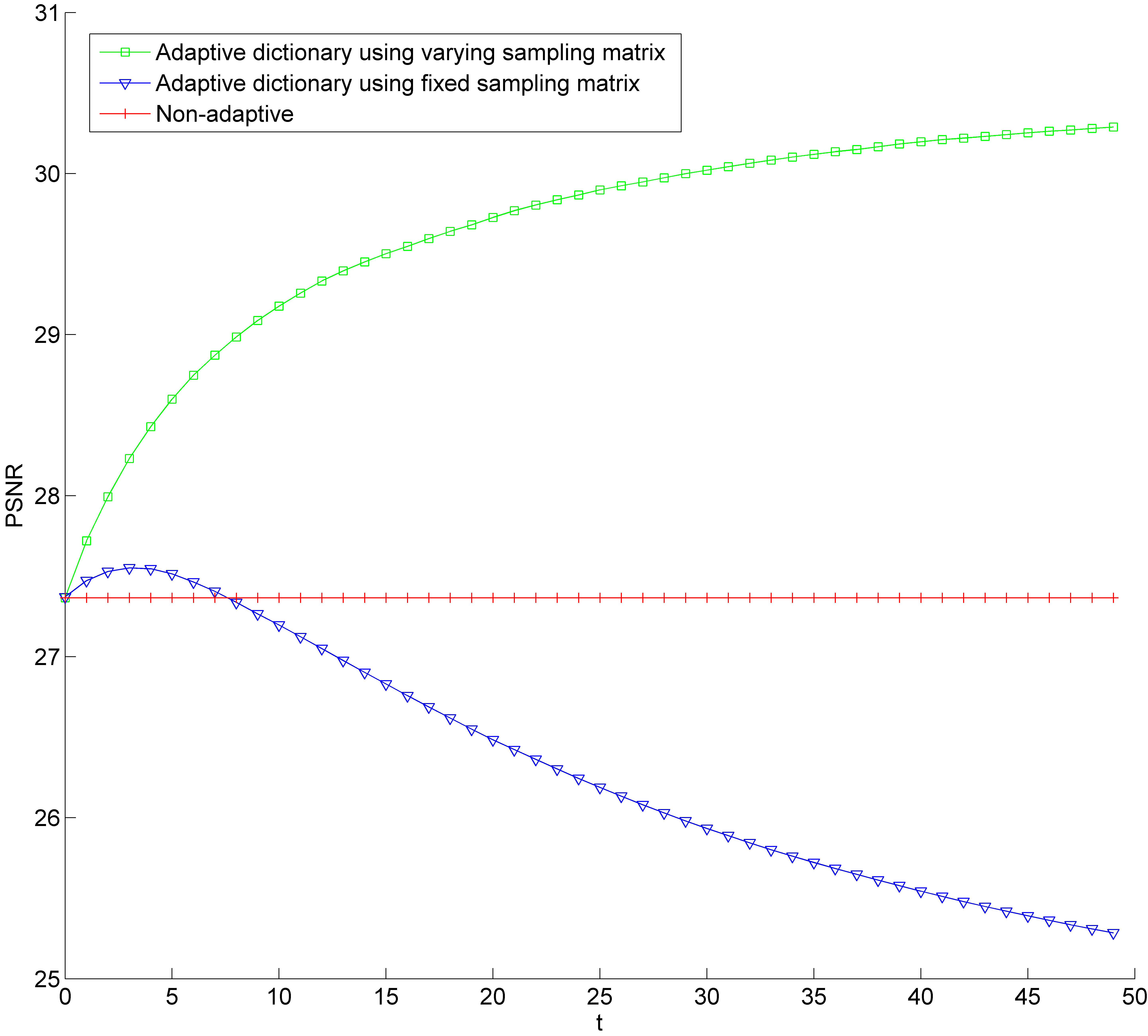}
\caption{Graphs of the real-time PSNR for cases of adaptive recovery using space-varying and fixed sampling matrices versus the non-adaptive recovery using a universal dictionary. The horizontal axis shows the iteration count $t$.}
\label{fig:example_plot}
\end{figure}

Meanwhile, the PSNR graph for the space-varying block-CS shows significant enhancement with respect to the initial PSNR just after a few iterations. It is helpful to visually inspect the dictionaries before and after adaptation. These dictionaries are shown in Figure \ref{fig:TwoDic}. The resultant adaptive dictionary is shown at the bottom of Figure \ref{fig:TwoDic}. A careful inspection reveals that some of the texture from the input image have been captured within the adaptive dictionary. These texture patterns are the parts of the image that could not be compactly represented using the initial dictionary. The misfit of the initial dictionary results in artifacts in the non-adaptive recovery around the textured areas as can be seen in Figure \ref{fig:Barbi_sub1}. The recovery after adaptation is shown in Figure \ref{fig:Barbi_sub2} for comparison. Note the improvements in the textured areas in the adaptively recovered image compared to the initial recovery using the universal dictionary. 

\begin{figure}[htb]
\centering
\includegraphics[trim=0in 0in 0in 0in,
clip=true,width=1\linewidth]{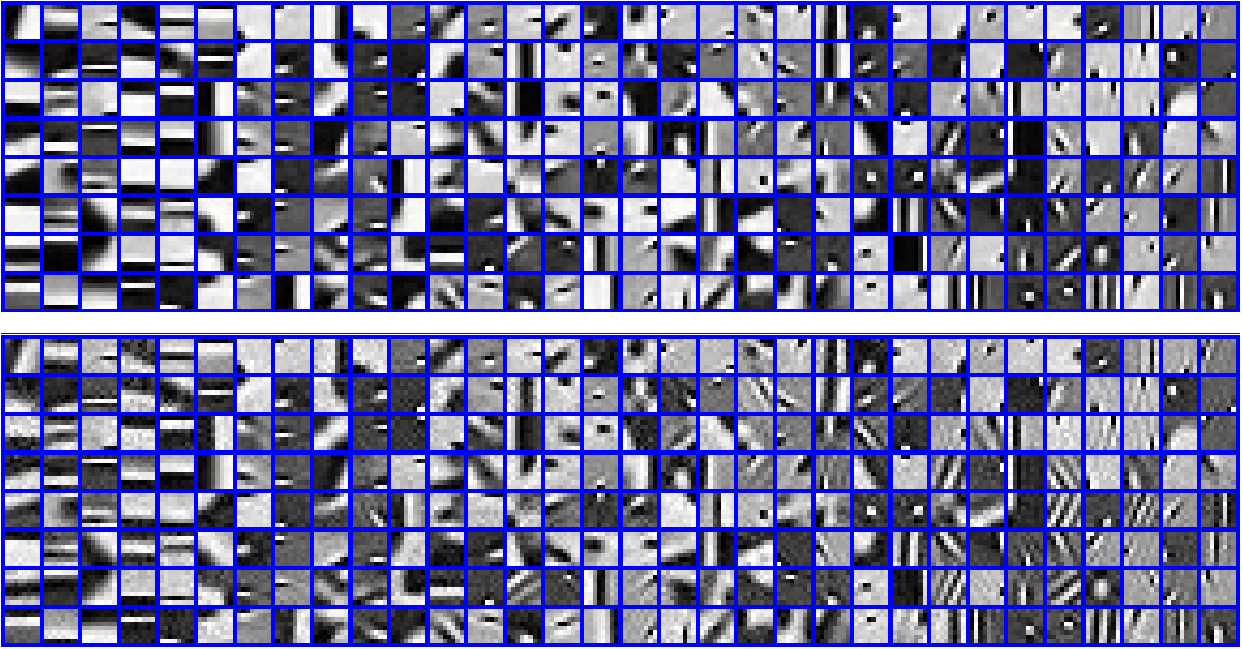}
\caption{Dictionaries. Top: the initial universal dictionary, bottom: the adapted dictionary based on 50\% sampling from Barbara's image.}
\label{fig:TwoDic}
\end{figure}

\begin{figure*}[htb]
\centering
	\subfloat[Using the universal dictionary\label{fig:Barbi_sub1}]{%
		\includegraphics[width=.45\textwidth]{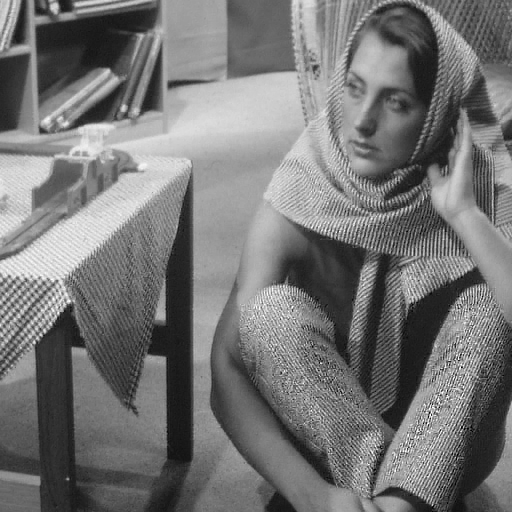}
	}
	\subfloat[Using the adaptive dictionary\label{fig:Barbi_sub2}]{%
		\includegraphics[width=.45\textwidth]{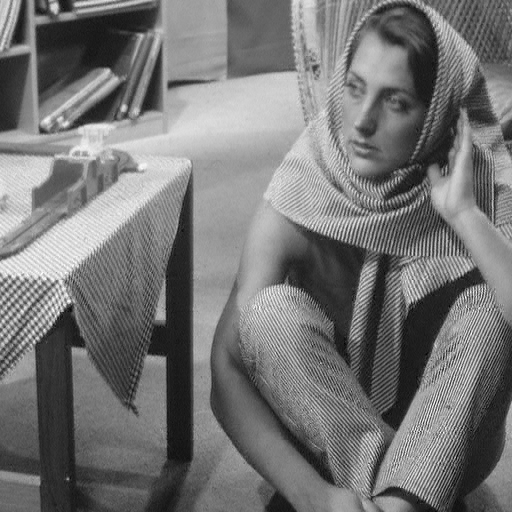}
	}
	\caption{Recoveries from 50\% sampling using the fixed universal and adaptive dictionaries.}
	\label{fig:Barbi}
\end{figure*}

\section{Simulations and Results}
\label{sec:simo}

\begin{table*}
\centering
\begin{tabular}{|c|c|c|c|c|c|c|c|c|c|c|c|c|c|c|}
\cline{2-15}
\multicolumn{1}{c|}{} & \multicolumn{7}{c|}{Universal dictionary} & 
\multicolumn{7}{c|}{Adaptive dictionary} \\
\cline{2-15}
\multicolumn{1}{c|}{} & 5\% & 10\% & 20\% & 25\% & 30\% & 40\% & 50\% & 5\% & 10\% & 20\% & 25\% & 30\% & 40\% & 50\% \\
\hline
Barbara & 
22.14 & 23.01 & 25.19 & 26.52 & 27.43 & 29.19 & 31.09 & 
22.20 & 23.22 & 25.99 & 27.88 & 29.39 & 32.05 & 34.85 \\ 
\hline
boat & 
23.47 & 24.84 & 28.47 & 30.80 & 32.41 & 35.24 & 38.18 & 
23.51 & 25.03 & 29.11 & 31.45 & 33.03 & 35.95 & 39.23 \\ 
\hline
bridge & 
21.98 & 23.15 & 26.05 & 27.88 & 29.16 & 31.67 & 34.79 & 
22.03 & 23.34 & 26.50 & 28.24 & 29.48 & 31.97 & 35.13 \\ 
\hline
couple & 
23.59 & 24.94 & 28.48 & 30.80 & 32.41 & 35.43 & 38.77 & 
23.65 & 25.33 & 29.29 & 31.45 & 32.95 & 35.87 & 39.29 \\ 
\hline
fingerprint 1 & 
17.77 & 19.31 & 23.66 & 26.88 & 29.27 & 33.87 & 39.02 & 
17.95 & 20.71 & 26.18 & 29.29 & 31.65 & 36.03 & 41.29 \\ 
\hline
fingerprint 2 & 
17.58 & 19.13 & 22.89 & 25.03 & 26.33 & 28.41 & 30.46 & 
17.84 & 21.35 & 27.59 & 30.74 & 32.95 & 35.81 & 38.85 \\ 
\hline
Flintstones & 
17.12 & 18.59 & 22.72 & 25.52 & 27.41 & 30.61 & 33.45 & 
17.13 & 18.73 & 23.48 & 26.29 & 28.17 & 31.22 & 33.97 \\ 
\hline
grass & 
12.51 & 13.29 & 15.36 & 16.85 & 17.99 & 20.32 & 23.01 & 
12.55 & 13.52 & 15.94 & 17.58 & 18.88 & 21.43 & 24.62 \\ 
\hline
hill & 
25.83 & 27.21 & 30.85 & 33.17 & 34.70 & 37.44 & 40.28 & 
25.89 & 27.76 & 31.74 & 33.83 & 35.26 & 37.91 & 41.05 \\ 
\hline
house & 
24.02 & 25.39 & 29.22 & 31.73 & 33.42 & 36.42 & 39.44 & 
24.09 & 25.89 & 32.27 & 35.96 & 38.27 & 41.83 & 45.51 \\ 
\hline
Lena & 
25.12 & 26.75 & 31.44 & 34.40 & 36.33 & 39.69 & 43.19 & 
25.20 & 27.56 & 32.39 & 34.98 & 36.76 & 39.93 & 43.35 \\ 
\hline
man & 
24.40 & 25.79 & 29.43 & 31.76 & 33.32 & 36.19 & 39.44 & 
24.46 & 26.11 & 30.14 & 32.28 & 33.76 & 36.52 & 39.70 \\ 
\hline
matches & 
21.16 & 22.90 & 27.83 & 30.82 & 32.62 & 35.73 & 39.35 & 
21.21 & 23.93 & 29.18 & 31.70 & 33.50 & 36.67 & 40.47 \\ 
\hline
shuttle & 
26.26 & 28.06 & 33.74 & 37.83 & 40.58 & 45.22 & 49.71 & 
26.39 & 29.33 & 35.70 & 39.38 & 41.91 & 46.20 & 50.53 \\ 

\hline
\end{tabular}
\caption{Recovery PSNRs for {\bf SET1}. PSNRs are in {\normalfont dB}. Percentage values are sampling ratios.}
\label{tab:set1}
\end{table*}

\begin{table*}
\centering
\begin{tabular}{|c|c|c|c|c|c|c|c|c|c|c|c|c|c|c|}
\cline{2-15}
\multicolumn{1}{c|}{} & \multicolumn{7}{c|}{Universal dictionary} & 
\multicolumn{7}{c|}{Adaptive dictionary} \\
\cline{2-15}
\multicolumn{1}{c|}{} & 5\% & 10\% & 20\% & 25\% & 30\% & 40\% & 50\% & 5\% & 10\% & 20\% & 25\% & 30\% & 40\% & 50\% \\
\hline
Barbara & 
21.17 & 21.94 & 23.77 & 24.88 & 25.65 & 27.09 & 28.55 & 
21.23 & 22.06 & 24.33 & 25.75 & 26.84 & 28.80 & 30.83 \\ 
\hline
boat & 
22.34 & 23.45 & 26.05 & 27.54 & 28.52 & 30.15 & 31.61 & 
22.37 & 23.52 & 26.34 & 27.81 & 28.76 & 30.49 & 32.32 \\ 
\hline
bridge & 
20.40 & 21.20 & 22.91 & 23.86 & 24.48 & 25.54 & 26.55 & 
20.44 & 21.22 & 23.01 & 23.93 & 24.52 & 25.68 & 27.02 \\ 
\hline
couple & 
22.39 & 23.44 & 25.97 & 27.46 & 28.45 & 30.12 & 31.76 & 
22.44 & 23.62 & 26.40 & 27.78 & 28.69 & 30.34 & 32.15 \\ 
\hline
fingerprint 1 & 
17.09 & 18.46 & 21.93 & 24.22 & 25.74 & 28.40 & 31.13 & 
17.21 & 19.43 & 23.51 & 25.53 & 26.87 & 29.13 & 31.62 \\ 
\hline
fingerprint 2 & 
17.09 & 18.51 & 21.83 & 23.64 & 24.75 & 26.48 & 28.13 & 
17.30 & 20.38 & 25.61 & 28.15 & 29.95 & 32.33 & 34.97 \\ 
\hline
Flintstones & 
16.53 & 17.83 & 21.14 & 23.17 & 24.43 & 26.40 & 27.96 & 
16.55 & 17.89 & 21.61 & 23.58 & 24.79 & 26.70 & 28.38 \\ 
\hline
grass & 
11.75 & 12.38 & 13.96 & 15.01 & 15.77 & 17.15 & 18.49 & 
11.78 & 12.50 & 14.28 & 15.39 & 16.21 & 17.68 & 19.33 \\ 
\hline
hill & 
24.19 & 25.20 & 27.47 & 28.74 & 29.56 & 30.94 & 32.30 & 
24.25 & 25.41 & 27.83 & 28.96 & 29.69 & 31.09 & 32.65 \\ 
\hline
house & 
23.14 & 24.34 & 27.44 & 29.34 & 30.57 & 32.62 & 34.60 & 
23.21 & 24.68 & 29.33 & 31.61 & 32.96 & 34.89 & 36.72 \\ 
\hline
Lena & 
24.06 & 25.41 & 28.77 & 30.66 & 31.80 & 33.69 & 35.45 & 
24.13 & 25.96 & 29.36 & 31.01 & 32.04 & 33.83 & 35.60 \\ 
\hline
man & 
23.10 & 24.17 & 26.65 & 28.05 & 28.93 & 30.46 & 31.97 & 
23.15 & 24.33 & 26.96 & 28.26 & 29.08 & 30.60 & 32.24 \\ 
\hline
matches & 
20.22 & 21.65 & 25.10 & 26.83 & 27.78 & 29.32 & 30.88 & 
20.26 & 22.31 & 25.82 & 27.23 & 28.10 & 29.56 & 31.14 \\ 
\hline
shuttle & 
25.47 & 27.14 & 31.98 & 35.11 & 37.16 & 40.49 & 43.69 & 
25.57 & 28.30 & 33.52 & 36.25 & 38.10 & 41.14 & 44.16 \\ 

\hline
\end{tabular}
\caption{Recovery PSNRs for {\bf SET2}. PSNRs are in {\normalfont dB}. Percentage values are sampling ratios.}
\label{tab:set2}
\end{table*}

\begin{table*}
\centering
\begin{tabular}{|c|c|c|c|c|c|c|c|c|c|c|c|c|c|c|}
\cline{2-15}
\multicolumn{1}{c|}{} & \multicolumn{7}{c|}{Universal dictionary} & 
\multicolumn{7}{c|}{Adaptive dictionary} \\
\cline{2-15}
\multicolumn{1}{c|}{} & 5\% & 10\% & 20\% & 25\% & 30\% & 40\% & 50\% & 5\% & 10\% & 20\% & 25\% & 30\% & 40\% & 50\% \\
\hline
Barbara & 
21.08 & 21.76 & 23.25 & 24.06 & 24.64 & 25.76 & 27.16 & 
21.17 & 21.95 & 23.70 & 24.82 & 25.78 & 27.60 & 29.76 \\ 
\hline
boat & 
22.24 & 23.29 & 25.83 & 27.33 & 28.31 & 30.04 & 31.69 & 
22.30 & 23.46 & 26.20 & 27.71 & 28.67 & 30.47 & 32.32 \\ 
\hline
bridge & 
20.36 & 21.17 & 22.97 & 23.96 & 24.62 & 25.80 & 27.06 & 
20.41 & 21.27 & 23.10 & 24.02 & 24.62 & 25.82 & 27.24 \\ 
\hline
couple & 
22.34 & 23.40 & 25.96 & 27.52 & 28.56 & 30.47 & 32.44 & 
22.41 & 23.67 & 26.39 & 27.79 & 28.72 & 30.51 & 32.46 \\ 
\hline
fingerprint 1 & 
16.98 & 18.22 & 21.40 & 23.49 & 24.95 & 27.62 & 30.53 & 
17.14 & 19.30 & 23.26 & 25.30 & 26.68 & 28.97 & 31.58 \\ 
\hline
fingerprint 2 & 
17.01 & 18.50 & 22.22 & 24.30 & 25.57 & 27.80 & 30.30 & 
17.30 & 20.55 & 25.95 & 28.70 & 30.75 & 33.50 & 36.67 \\ 
\hline
Flintstones & 
16.43 & 17.62 & 20.71 & 22.60 & 23.88 & 26.05 & 28.00 & 
16.47 & 17.78 & 21.33 & 23.29 & 24.58 & 26.62 & 28.51 \\ 
\hline
grass & 
11.71 & 12.27 & 13.65 & 14.63 & 15.37 & 16.84 & 18.50 & 
11.74 & 12.37 & 14.00 & 15.10 & 15.94 & 17.47 & 19.24 \\ 
\hline
hill & 
24.12 & 25.15 & 27.49 & 28.80 & 29.65 & 31.16 & 32.78 & 
24.20 & 25.42 & 27.84 & 29.04 & 29.80 & 31.27 & 32.92 \\ 
\hline
house & 
23.07 & 24.19 & 27.09 & 28.96 & 30.28 & 32.62 & 34.97 & 
23.13 & 24.51 & 29.08 & 31.43 & 32.80 & 34.83 & 36.75 \\ 
\hline
Lena & 
23.96 & 25.24 & 28.50 & 30.27 & 31.37 & 33.28 & 35.19 & 
24.06 & 25.84 & 29.17 & 30.78 & 31.81 & 33.58 & 35.44 \\ 
\hline
man & 
23.04 & 24.10 & 26.63 & 28.04 & 28.98 & 30.59 & 32.28 & 
23.10 & 24.32 & 26.97 & 28.30 & 29.14 & 30.73 & 32.51 \\ 
\hline
matches & 
20.13 & 21.59 & 25.08 & 26.73 & 27.68 & 29.28 & 31.03 & 
20.20 & 22.15 & 25.79 & 27.19 & 28.07 & 29.61 & 31.34 \\ 
\hline
shuttle & 
25.40 & 27.08 & 31.88 & 34.80 & 36.63 & 39.96 & 43.28 & 
25.56 & 28.27 & 33.39 & 36.06 & 37.89 & 40.97 & 44.13 \\ 

\hline
\end{tabular}
\caption{Recovery PSNRs for {\bf SET3}. PSNRs are in {\normalfont dB}. Percentage values are sampling ratios.}
\label{tab:set3}
\end{table*}

\begin{table*}
\centering
\begin{tabular}{|c|c|c|c|c|c|c|c|c|c|c|c|c|c|c|}
\cline{2-15}
\multicolumn{1}{c|}{} & \multicolumn{7}{c|}{Universal dictionary} & 
\multicolumn{7}{c|}{Adaptive dictionary} \\
\cline{2-15}
\multicolumn{1}{c|}{} & 5\% & 10\% & 20\% & 25\% & 30\% & 40\% & 50\% & 5\% & 10\% & 20\% & 25\% & 30\% & 40\% & 50\% \\
\hline
Barbara & 
20.60 & 21.07 & 22.50 & 23.53 & 24.33 & 25.97 & 28.06 & 
20.97 & 21.91 & 23.50 & 24.72 & 25.81 & 27.57 & 29.66 \\ 
\hline
boat & 
21.50 & 22.01 & 23.57 & 24.68 & 25.50 & 27.20 & 29.26 & 
21.95 & 23.15 & 24.91 & 26.11 & 27.10 & 28.63 & 30.48 \\ 
\hline
bridge & 
19.69 & 20.04 & 21.19 & 22.01 & 22.62 & 23.85 & 25.31 & 
20.07 & 20.91 & 22.22 & 23.06 & 23.74 & 24.81 & 26.12 \\ 
\hline
couple & 
21.61 & 22.13 & 23.71 & 24.86 & 25.72 & 27.44 & 29.41 & 
22.06 & 23.33 & 25.19 & 26.41 & 27.39 & 28.84 & 30.57 \\ 
\hline
fingerprint 1 & 
16.14 & 16.69 & 18.54 & 20.01 & 21.20 & 23.66 & 26.56 & 
16.87 & 18.35 & 20.95 & 22.75 & 24.22 & 26.14 & 28.41 \\ 
\hline
fingerprint 2 & 
16.16 & 16.99 & 19.35 & 20.93 & 22.08 & 24.26 & 26.71 & 
17.10 & 18.97 & 21.74 & 23.57 & 25.12 & 27.16 & 29.53 \\ 
\hline
Flintstones & 
15.60 & 16.07 & 17.66 & 18.88 & 19.88 & 21.96 & 24.45 & 
15.98 & 17.02 & 18.93 & 20.32 & 21.54 & 23.40 & 25.58 \\ 
\hline
grass & 
11.12 & 11.29 & 12.08 & 12.76 & 13.34 & 14.63 & 16.36 & 
11.30 & 11.84 & 13.09 & 13.99 & 14.80 & 16.06 & 17.66 \\ 
\hline
hill & 
23.43 & 23.96 & 25.50 & 26.52 & 27.26 & 28.73 & 30.46 & 
23.94 & 25.25 & 26.94 & 27.97 & 28.81 & 30.04 & 31.55 \\ 
\hline
house & 
22.29 & 22.77 & 24.36 & 25.61 & 26.61 & 28.76 & 31.36 & 
22.69 & 24.02 & 26.49 & 28.26 & 29.77 & 31.76 & 33.90 \\ 
\hline
Lena & 
23.17 & 23.83 & 25.71 & 27.00 & 27.95 & 29.89 & 32.11 & 
23.74 & 25.24 & 27.26 & 28.62 & 29.71 & 31.30 & 33.20 \\ 
\hline
man & 
22.28 & 22.75 & 24.23 & 25.26 & 26.04 & 27.57 & 29.45 & 
22.75 & 23.97 & 25.65 & 26.71 & 27.60 & 28.88 & 30.50 \\ 
\hline
matches & 
19.34 & 20.22 & 22.63 & 24.02 & 24.96 & 26.68 & 28.62 & 
19.92 & 21.74 & 23.96 & 25.23 & 26.22 & 27.66 & 29.39 \\ 
\hline
shuttle & 
24.58 & 25.55 & 28.27 & 30.20 & 31.64 & 34.50 & 37.99 & 
25.27 & 27.44 & 30.48 & 32.53 & 34.24 & 36.78 & 39.82 \\ 

\hline
\end{tabular}
\caption{Recovery PSNRs for {\bf SET4}. PSNRs are in {\normalfont dB}. Percentage values are sampling ratios.}
\label{tab:set4}
\end{table*}

\begin{table*}
\centering
\begin{tabular}{|c|c|c|c|c|c|c|c|c|c|c|c|c|c|c|}
\cline{2-15}
\multicolumn{1}{c|}{} & \multicolumn{7}{c|}{Universal dictionary} & 
\multicolumn{7}{c|}{Adaptive dictionary} \\
\cline{2-15}
\multicolumn{1}{c|}{} & 5\% & 10\% & 20\% & 25\% & 30\% & 40\% & 50\% & 5\% & 10\% & 20\% & 25\% & 30\% & 40\% & 50\% \\
\hline
Barbara & 
21.19 & 21.89 & 23.78 & 24.90 & 25.66 & 27.06 & 28.40 & 
21.31 & 21.90 & 24.35 & 25.81 & 26.86 & 28.68 & 30.53 \\ 
\hline
boat & 
22.38 & 23.43 & 25.98 & 27.36 & 28.25 & 29.78 & 31.24 & 
22.47 & 23.40 & 26.22 & 27.60 & 28.49 & 30.09 & 31.84 \\ 
\hline
bridge & 
20.52 & 21.23 & 22.89 & 23.79 & 24.38 & 25.40 & 26.41 & 
20.65 & 21.32 & 22.99 & 23.89 & 24.48 & 25.60 & 26.92 \\ 
\hline
couple & 
22.49 & 23.48 & 25.94 & 27.32 & 28.23 & 29.82 & 31.44 & 
22.56 & 23.50 & 26.32 & 27.65 & 28.50 & 30.06 & 31.79 \\ 
\hline
fingerprint 1 & 
17.15 & 18.45 & 22.08 & 24.17 & 25.52 & 27.94 & 30.51 & 
17.24 & 18.74 & 23.51 & 25.33 & 26.56 & 28.67 & 31.04 \\ 
\hline
fingerprint 2 & 
17.13 & 18.40 & 21.52 & 23.10 & 24.11 & 25.85 & 27.71 & 
17.21 & 18.93 & 24.74 & 26.79 & 28.29 & 30.52 & 33.15 \\ 
\hline
Flintstones & 
16.58 & 17.80 & 21.05 & 22.84 & 23.98 & 25.83 & 27.40 & 
16.62 & 17.77 & 21.38 & 23.12 & 24.21 & 26.04 & 27.77 \\ 
\hline
grass & 
11.84 & 12.43 & 14.08 & 15.12 & 15.84 & 17.13 & 18.35 & 
11.99 & 12.59 & 14.44 & 15.53 & 16.29 & 17.67 & 19.25 \\ 
\hline
hill & 
24.25 & 25.19 & 27.43 & 28.62 & 29.39 & 30.72 & 32.07 & 
24.33 & 25.20 & 27.77 & 28.88 & 29.59 & 30.94 & 32.49 \\ 
\hline
house & 
23.22 & 24.33 & 27.38 & 29.11 & 30.23 & 32.16 & 34.11 & 
23.33 & 24.47 & 28.70 & 30.81 & 32.19 & 34.20 & 36.16 \\ 
\hline
Lena & 
24.06 & 25.33 & 28.62 & 30.33 & 31.39 & 33.18 & 34.96 & 
24.14 & 25.40 & 29.08 & 30.65 & 31.62 & 33.32 & 35.11 \\ 
\hline
man & 
23.19 & 24.18 & 26.59 & 27.85 & 28.68 & 30.12 & 31.61 & 
23.28 & 24.19 & 26.83 & 28.03 & 28.80 & 30.24 & 31.87 \\ 
\hline
matches & 
20.23 & 21.54 & 24.92 & 26.53 & 27.46 & 29.04 & 30.66 & 
20.27 & 21.54 & 25.59 & 26.94 & 27.82 & 29.31 & 30.94 \\ 
\hline
shuttle & 
25.53 & 27.02 & 31.63 & 34.32 & 36.04 & 39.03 & 42.09 & 
25.60 & 27.37 & 32.87 & 35.27 & 36.88 & 39.55 & 42.48 \\ 

\hline
\end{tabular}
\caption{Recovery PSNRs for {\bf SET5}. PSNRs are in {\normalfont dB}. Percentage values are sampling ratios.}
\label{tab:set5}
\end{table*}

\begin{figure}[htb]
\centering
\includegraphics[trim=0in 0in 0in 0in,
clip=true,width=1\linewidth]{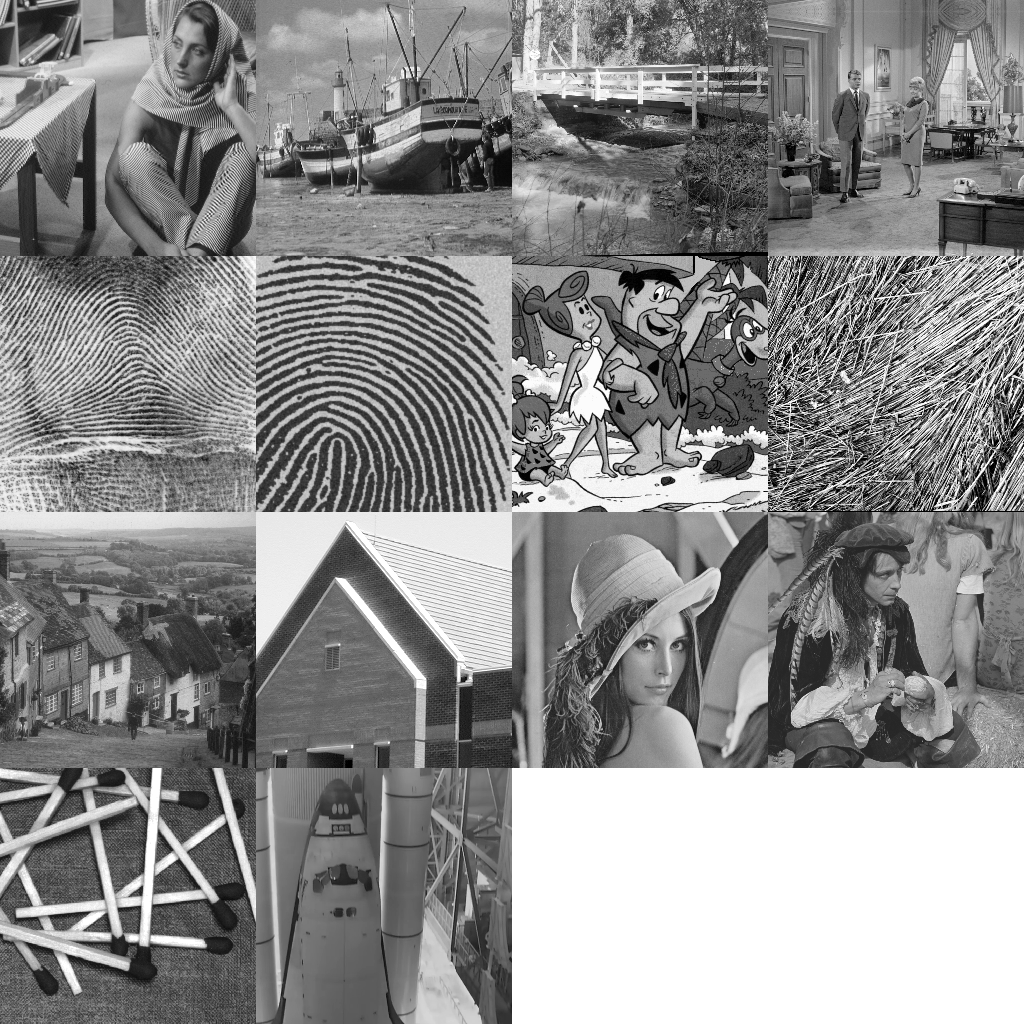}
\caption{Test images. From left to right and top to bottom: `Barbara', `boat', `bridge', `couple', `fingerprint 1', `fingerprint 2', `Flintstones', `grass', `hill', `house', `Lena', `man', `matches' and `shuttle'.}
\label{fig:testImages}
\end{figure}

\subsection{Simulation settings}
The set of $14$ test images, which are down-scaled due to the limited space, are shown in Figure \ref{fig:testImages}. Each test image has a resolution of $512\times 512$. In all simulations, we have used a fixed block size of $8\times 8$. Input parameters to Algorithm \ref{alg1} are $\lambda_0=0.05$, $\lambda_*=0.001$ and $T_{\max}=20$. However, we selected $T_*$ in the range $[1, 10]$ and proportional to $m$, the number of compressive measurements per block. 

This section includes five different simulation settings that are described in the following. 

{\bf SET1:} In this setting, we denoise\footnote{Here, denoising refers to the process of removing small representation coefficients, making the exact image representation truly sparse.} the test images prior to measurement by performing a Lasso regression for each (non-overlapping) block with $\lambda=0.05$ with respect to an ``ideal'' dictionary that is only known to the tester. This pre-processing guarantees that, although unknown to the learner, there exists an exact sparse representation for every test image. The ideal dictionary consists of $p=256$ atoms (a redundancy of factor $4$) that is learned based on the set of all overlapping blocks of the test image using the dictionary learning algorithm in \cite{Mairal_2010}. The initial dictionary is computed by cross-validation. Specifically, $10^6$ training patches were randomly selected from other images in Figure \ref{fig:testImages} for training the initial dictionary with $p=256$; this results in a distinct initial dictionary for each test image. For learning all ideal and initial dictionaries, we have set the maximum number of DL iterations at $1000$ with other learning parameters set at their default values \cite{Mairal_2009}. For example, the mini-batch size for the online learning algorithm is $400$ patches and $\lambda_1=0.15$.  

{\bf SET2:} This setting is identical to {\bf SET1}, except that we do not perform pre-processing on the test images. Therefore, these simulations are closer to the real case where there is no guarantee test images are sparse under any finite dictionary. 

{\bf SET3:} In this setting, for the initial dictionary, we have used a redundant dictionary that was trained over a large set of training images using the K-SVD method \cite{Elad_2006}. Testing with the K-SVD dictionary helps in benchmarking the performance of the proposed method. 

{\bf SET4:} In addition to testing the proposed DL-M algorithm with the state-of-the-art learned dictionaries, in this setting we test the algorithm with an ordinary orthogonal Discrete Cosine Transform (DCT). The advantage of using DCT as the initial dictionary is the low memory cost of storing the dictionary and faster learning due to the small size of the dictionary.

{\bf SET5:} Finally, we test the algorithm for the image inpainting problem. In this scenario, the measurements are acquired in the standard basis, i.e. a portion of pixels are measured. Pixel-wise sampling corresponds to the most practical acquisition approach. However, $\Phi_j$'s would not satisfy the sub-Gaussian concentration inequalities, resulting in a weaker performance than was promised in this paper. 

\subsection{Results and discussion}
The recovery PSNR results for every setting, averaged over $20$ trials for each case, are presented in Tables \ref{tab:set1} through \ref{tab:set5}. Since the block size is $n=64$, the sampling ratios $5\%$, $10\%$, $20\%$, $25\%$, $30\%$, $40\%$ and $50\%$ respectively correspond to $m=3$, $m=6$, $m=12$, $m=16$, $m=19$, $m=25$ and $m=32$ measurements per block. The running time on an Intel Core i5-3470 CPU with a clock speed of 3.2GHz, using Lasso on the Python platform developed for \cite{Mairal_2010}, respectively for $5\%$, $20\%$ and $50\%$ was measured around (on average) $40$, $50$ and $80$ seconds.

Generally speaking, DL-M results in noticeable block-CS recovery improvements for most images. We found that most cases would benefit from a higher $T_{\max}$ while some recovery results saturate or even degrade after a certain number of iterations between $10$ and $20$. This is expected since, as we showed in Section \ref{sec:math}, the image statistics can influence the DL-M performance and its resistance against overfitting. Not surprisingly, our best performance is achieved in {\bf SET1} where the image is guaranteed to allow exact sparse representation (although unknown to the solver). 

As the number of measurements per block is increased, the PSNR gain is more significant. This behavior is expected since, with more measurements, the DL-M problem becomes closer to the well-posed DL problem, i.e. DL based on the complete knowledge of the underlying image. Meanwhile, it is crucial that the PSNR gain stays positive for very low sampling ratios, e.g. for $m=3$ and $m=6$. As it can be seen, the proposed DL-M algorithm is reliable in every sampling ratio and resistive to overfitting for most test images. 

Finally, our inpainting results ({\bf SET5}) can be compared to the BCS results in \cite{bcs2} which uses the one-block-sparse signal model\footnote{Unfortunately, it is not possible to compare our results with the method of \cite{bcs3} since its code has not been released and the simulation settings and evaluation metrics in \cite{bcs3} are different than ours.}. In \cite{bcs2}, the recovery PSNR for Barbara's image with $n=64$, $p=256$ and $m=32$ was reported at $27.93$ dB which is lower than recovery PSNR using the non-adaptive universal dictionary reported in this paper. In addition to that, the proposed inpainting algorithm in \cite{bcs2} uses overlapping block-CS recovery which significantly increases $N$, resulting in performance boost at the cost of more computational complexity. Meanwhile, we have employed a non-overlapping framework to stay consistent with the general block-CS recovery where the measurement matrices can take any form and overlapping block recovery is not always possible.

\section{Conclusion}
\label{sec:discuss}
The analysis and the empirical experiments presented in this paper show that blind compressed sensing of natural images can be performed reliably without additional constraints on the dictionary or the sparse coefficients. Our results generalize the BCS frameworks in \cite{bcs1} and \cite{bcs2} by providing the probability of uniqueness, as well as accuracy. Meanwhile, we showed that dictionary learning is ill-posed when fixed block measurements are employed. The results of this paper can be extended to work with dense measurement matrices using (\ref{eq:24524552}). There are at least two main directions to improve and build upon the work reported in this paper:

\begin{itemize}
\item \textit{Deterministic sensing matrices}: We studied the class of random independent Gaussian sensing matrices for sensor design. Meanwhile, deterministic designs of sensing matrices are proven to perform equally well \cite{Devore_2007} while addressing the implementation constraints. 

\item \textit{Online DL-M}: It has been known \cite{Mairal_2010} that online gradient descent is more efficient than batch gradient descent for large data sets and especially when the data becomes available in a streaming fashion. Therefore, extension to online DL-M, both algorithmically and analysis-wise, could be the next step.

\end{itemize}


\begin{thebibliography}{1}

\bibitem{Candes_2011}
Emmanuel J. Candes, Yonina C. Eldar, Deanna Needell and Paige Randall, ``Compressed sensing with coherent and redundant dictionaries,'' \emph{Applied and Computational Harmonic Analysis}, vol. 31(1), pp. 59-73, 2011.

\bibitem{Donoho_1994}
D.~L. Donoho and I.~M. Johnstone, ``Ideal spatial adaptation by wavelet shrinkage,'' \emph{Biometrika}, vol. 81, no. 3, pp. 425-455, September 1994.

\bibitem{V1}
B. A. Olshausen and D. J. Field, ``Sparse coding with an overcomplete basis set: A strategy employed by V1?,'' \emph{Vis. Res.}, vol. 37, pp. 311–325, 1997.

\bibitem{Elad_2006}
M. Elad and M. Aharon, ``Image denoising via sparse and redundant representations over learned dictionaries,'' IEEE \emph{Transactions on Image Processing}, vol.15, no.12, pp.3736-3745, Dec 2006.

\bibitem{Mairal_2008}
J. Mairal, M. Elad, and G. Sapiro, ``Sparse representation for color image restoration,'' \emph{IEEE Transactions on Image Processing}, 17(1):53-69, January 2008.

\bibitem{bcs1}
S. Gleichman and Y. Eldar, ``Blind compressed sensing,'' \emph{IEEE Transactions on Information Theory}, vol. 57, no. 10, pp. 6958–6975, 2011.

\bibitem{bcs2}
J. Silva, M. Chen, Y. C. Eldar, G. Sapiro and L. Carin, ``Blind compressed sensing over a structured union of subspaces,'' arXiv:1103.2469v1, 2011.

\bibitem{bcs3}
C. Studer and R. Baraniuk, ``Dictionary learning from sparsely corrupted or compressed signals,'' in \emph{IEEE International Conference on Acoustics, Speech and Signal Processing} (ICASSP), pp. 3341–3344, 2012.

\bibitem{GlobalSIP13}
M. Aghagolzadeh and H. Radha, ``Adaptive Dictionaries for Compressive Imaging,'' \emph{IEEE Global Conference on Signal and Information Processing} (GlobalSIP), December 2013. 

\bibitem{CDL_2012}
M. Aghagolzadeh and H. Radha, ``Compressive dictionary learning for image recovery,'' \emph{International Conference on Image Processing (ICIP)}, pp. 661-664, 2012.

\bibitem{Recht_2009}
B. Recht, ``A simpler approach to matrix completion,'' arXiv:0910.0651, 2009.

\bibitem{block-CS1}
L. Gan, ``Block compressed sensing of natural images,'' in \emph{IEEE 15th International Conference on Digital Signal Processing}, pp. 403–406, 2007.

\bibitem{block-CS2}
S. Mun and J. E. Fowler, ``Block compressed sensing of images using directional transforms,'' in \emph{Proceedings of the International Conference on Image Processing}, pp. 3021–3024, 2009.

\bibitem{Lasso_2004}
B. Efron, T. Hastie, I. Johnstone and R. Tibshirani, ``Least angle regression,'' \emph{Annals of Statistics}, 32, 407-499, 2004.

\bibitem{schnass_2010}
R. Gribonval and K. Schnass, ``Dictionary identification-sparse matrix-factorisation via l1-minimisation,'' \emph{IEEE Transactions on Information Theory}, 56(7):3523-3539, 2010.



\bibitem{Mairal_2009}
J. Mairal, F. Bach, J. Ponce, G. Sapiro and A. Zisserman, ``Non-local sparse models for image restoration,'' \emph{In Proceedings of the IEEE International Conference on Computer Vision (ICCV)}, 2009.

\bibitem{Mairal_2010}
J. Mairal, F. Bach, J. Ponce and G. Sapiro, ``Online learning for matrix factorization and sparse coding,'' \emph{Journal of Machine Learning Research}, vol. 11, pp. 19–60, 2010.

\bibitem{Aharon_2008}
M. Aharon and M. Elad, ``Sparse and redundant modeling of image content using an image signature dictionary,'' \emph{SIAM Journal on Imaging Sciences}, 1(3):228-247, July 2008.

\bibitem{Deb_1995}
K. Deb, ``Optimization for Engineering Design: Algorithms and Examples,'' Prentice-Hall, New Delhi, 1995.

\bibitem{Candes_2008}
E.~J. Candes, ``The restricted isometry property and its implications for compressed sensing,'' \emph{Compte Rendus de lÕAcademie des Sciences}, Series I, vol. 346, no. 9-10, pp. 589-592, May 2008.

\bibitem{MOD}
K. Engan, S. O. Aase and J. H. Hakon-Husoy, ``Method of optimal directions for frame design,'' in \emph{IEEE International Conference on Acoustics, Speech, and Signal Processing}, vol. 5, pp. 2443–2446, 1999.

\bibitem{ILS}
K. Engan, K. Skretting and J. H. Husoy, ``A family of iterative LS-based dictionary learning algorithms, ILS-DLA, for sparse signal representation,'' \emph{Digital Signal Processing}, vol. 17, pp. 32-49, Jan. 2007.

\bibitem{RLS}
K. Skretting and K. Engan, ``Recursive least squares dictionary learning algorithm,'' \emph{IEEE Transactions on Signal Processing}, vol. 58, no. 4, pp. 2121–2130, 2010.

\bibitem{Engan_2003}
K. Kreutz-Delgado, J. F. Murray, B. D. Rao, K. Engan, T. Lee and T. J. Sejnowski, ``Dictionary learning algorithms for sparse representation,'' \emph{Neural Computation}, vol. 15, no. 2, pp. 349–396, 2003.

\bibitem{Davies_2008}
M. Yaghoobi, T. Blumensath and M.E. Davies, ``Regularized dictionary learning for sparse approximation,'' In \emph{Proceedings of EUSIPCO}, 2008.

\bibitem{from-sparse}
A. M. Bruckstein, D. L. Donoho and M. Elad, ``From sparse solutions of systems of equations to sparse modeling of signals and images,'' \emph{SIAM Review}, vol. 51, no. 1, pp. 34–81, 2009.

\bibitem{Baraniuk_2008}
R. Baraniuk, M. Davenport, R. DeVore and M. Wakin, ``A simple proof of the restricted isometry property for random matrices,'' \emph{Constructive Approximation}, vol. 28, no. 3, pp. 253-263, 2008.

\bibitem{Tropp_2012}
Joel A. Tropp, ``User-Friendly Tools for Random Matrices: An Introduction,'' NIPS, 2012. 

\bibitem{lasso-unique}
Ryan J. Tibshirani, ``The lasso problem and uniqueness,'' \emph{Electron. J. Stat.}, 7, pp. 1456–1490, 2013.

\bibitem{block-cs}
J.Y. Park, H.L. Yap, C.J. Rozell and M.B.Wakin, ``Concentration of measure for block diagonal matrices with applications to compressive signal processing,'' \emph{IEEE Transactions on Signal Processing}, 59(12):5859-5875, 2011.

\bibitem{Donoho_2008}
D. L. Donoho and Y. Tsaig, ``Fast solution of $\ell_1$-norm minimization problems when the solution may be sparse,'' \emph{IEEE Trans. Inf. Theory}, vol. 54, no. 11, pp. 4789–4812, 2008.


\bibitem{Devore_2007}
R. DeVore, ``Deterministic constructions of compressed sensing matrices,'' \emph{Journal of Complexity}, vol. 23, pp. 918–925, 2007.

\bibitem{Wu_2012}
M. Wu, X. Zhu, Z. Gan, X. Li, ``Adaptive Dictionary Learning for Distributed Compressive Video Sensing,'' \emph{Intl. Jrnl. of Digital Content Technology and its Applications}, 6(4), pp. 141-149, 2012.


\end{thebibliography}
\end{document}